\documentclass{article} 
\usepackage{iclr2024_conference,times}


\usepackage{amsmath,amsfonts,bm}









\def\eqref#1{equation~\ref{#1}}









\def\1{\bm{1}}










\DeclareMathAlphabet{\mathsfit}{\encodingdefault}{\sfdefault}{m}{sl}
\SetMathAlphabet{\mathsfit}{bold}{\encodingdefault}{\sfdefault}{bx}{n}













\usepackage[utf8]{inputenc} 
\usepackage[T1]{fontenc}    
\usepackage[bookmarks, colorlinks=true, citecolor=violet,linkcolor=red,urlcolor=blue]{hyperref}
\usepackage{url}            
\usepackage{booktabs}       
\usepackage{amsfonts}       
\usepackage{nicefrac}       
\usepackage{microtype}      

\usepackage{graphicx}
\usepackage{amsmath}
\usepackage{amssymb}
\usepackage{booktabs}
\usepackage[utf8]{inputenc} 
\usepackage[T1]{fontenc}    
\usepackage{url}            
\usepackage{amsfonts}       
\usepackage{nicefrac}       
\usepackage{microtype}      
\usepackage{bbding}
\usepackage{pifont}
\usepackage{bbm}
\usepackage{soul}
\usepackage{multirow}
\usepackage{framed}
\usepackage{amsthm}
\usepackage{makecell}
\usepackage{subfigure}
\usepackage{hyperref}
\usepackage{footnotebackref}
\usepackage{wrapfig}
\usepackage{graphicx}
\usepackage{verbatim}
\usepackage{amsmath}
\usepackage{amsthm}
\usepackage{amssymb}
\usepackage{mathtools} 
\usepackage{bm}
\usepackage{caption}  
\usepackage{multirow}
\usepackage{comment}
\usepackage{xcolor}
\usepackage{algorithm}
\usepackage{algpseudocode}
\usepackage{minitoc}

\usepackage{amsmath}
\usepackage{amssymb}
\usepackage{mathtools}
\usepackage{amsthm}
\usepackage[capitalize,noabbrev]{cleveref}

\usepackage{thmtools}
\usepackage{thm-restate}

\theoremstyle{plain}

\theoremstyle{definition}
\newtheorem{definition}{Definition}
\newtheorem{assumption}{Assumption}
\theoremstyle{remark}
\newtheorem{remark}{Remark}

\makeatletter
\newtheoremstyle{namedtheoremstyle}%
  {\topsep}{\topsep}
  {\itshape}{}
  {\bfseries}{.}
  {0.5em}
  {\thmname{\@ifempty{#3}{#1}\@ifnotempty{#3}{#3}}}
\makeatother

\theoremstyle{namedtheoremstyle}


\usepackage[textsize=tiny]{todonotes}

\definecolor{mine}{RGB}{205, 232, 248}

\definecolor{modcolor}{RGB}{0,0,0}

\author{Qiang He$^1$, Tianyi Zhou$^2$, Meng Fang$^3$, Setareh Maghsudi$^1$ \\
$^1$Ruhr University Bochum, $^2$University of Maryland, College Park, $^3$University of Liverpool\\
\{Qiang.He, Setareh.Maghsudi\}@rub.de, tianyi@umd.edu, Meng.Fang@liverpool.ac.uk \\
}
\title{Adaptive Regularization of Representation Rank as an Implicit Constraint of Bellman Equation}

\iclrfinalcopy
\begin{document}
\maketitle

\begin{abstract}

Representation rank is an important concept for understanding the role of Neural Networks (NNs) in Deep Reinforcement learning (DRL), which measures the expressive capacity of value networks. Existing studies focus on unboundedly maximizing this rank; nevertheless, that approach would introduce overly complex models in the learning, thus undermining performance. Hence, fine-tuning representation rank presents a challenging and crucial optimization problem. To address this issue, we find a guiding principle for adaptive control of the representation rank. We employ the Bellman equation as a theoretical foundation and derive an upper bound on the cosine similarity of consecutive state-action pairs representations of value networks. We then leverage this upper bound to propose a novel regularizer, namely \underline{BE}llman \underline{E}quation-based automatic rank \underline{R}egularizer (BEER). This regularizer adaptively regularizes the representation rank, thus improving the DRL agent's performance. We first validate the effectiveness of automatic control of rank on illustrative experiments. Then, we scale up BEER to complex continuous control tasks by combining it with the deterministic policy gradient method. Among 12 challenging DeepMind control tasks, BEER outperforms the baselines by a large margin. Besides, BEER demonstrates significant advantages in Q-value approximation. Our code is available at \href{https://github.com/sweetice/BEER-ICLR2024}{https://github.com/sweetice/BEER-ICLR2024}.
\end{abstract}

\section{Introduction}
Deep reinforcement learning (DRL), empowered by the large capacity of neural networks (NNs), has made notable strides in a variety of sequential decision-making tasks, ranging from games~\citep{nature_dqn,alpha,starcraft}, to robotics control~\citep{sac}, and large language models~\citep{rlhf,gpt4}. Yet, despite these advancements, NNs within DRL systems are still largely treated as black-box function approximators~\citep{ppo,ddpg,td3,sac}. Investigating the specific roles played by NNs in DRL may offer insights into their function, enabling further optimization of DRL agents from the aspect of NNs. Recent work~\citep{dr3,lyle2021understanding,he2023frustratingly} has begun to shed light on the role of NNs in RL, proposing to leverage some properties (e.g., capacity~\citep{lyle2021understanding} and plasticity~\citep{lyle23understandingplasticity,primacybias,dormantneurons}) of NNs to improve DRL agents. 

A core concept to study the properties of NNs in the DRL setting is the representation rank~\citep{dr3}. It measures the representation capacity of a NN by performing a singular value decomposition on the output of its representation layer, which serves as the input representation for value functions. Previous studies (e.g., InFeR~\citep{lyle2021understanding} and DR3~\citep{dr3}) heavily focus on unbounded maximizing this metric. Yet, that approach might cause overfitting by constructing an overly complex function approximation, thus inhibiting the model's ability to generalize to new data. The resulting complexity demands more data and makes the model susceptible to noise that deters sampling efficiency and robustness in RL. As such, an indiscriminate maximization of representation rank is not advisable. A more desirable method is to \textit{adaptively} control the representation rank. 
However, empirically fine-tuning the balance of representation rank can be tricky. An excessively complex model induces the issues mentioned above, whereas overly simple models lose the capability to obtain the optimal policy. As a result, it is imperative to search for a guiding principle that allows for the adaptive control of the representation rank.

A primary discovery of this paper is that an adaptive control of the representation rank can be derived from Bellman equation~\citep{rl} as an implicitly imposed constraint on NNs. Specifically, this can be achieved by an upper bound on the cosine similarity between the representations of consecutive state-action pairs, determined by factors like the discount factor, the weights of the last layer of the neural network, and the representations norms. 
Cosine similarity measures the linear dependent relationship between the composition vectors, thus affecting the rank. The implication is profound: The cosine similarity is constrained, thereby restraining the representation rank itself~\citep{dr3,lyle2021understanding,lyle23understandingplasticity}. However, a previous work~\citep{dr3} shows that the dynamics of NNs result in feature co-adaptation, which potentially makes DRL agents fail to hold the bound. Thus, these derived upper bounds provide a criterion to adaptively control the representation rank. Motivated by these findings, we introduce a novel regularizer, namely \underline{BE}llman \underline{E}quation-based automatic rank \underline{R}egularizer (BEER), designed to control the representation rank by regularizing the similarity between adjacent representations of the value network. That allows DRL agents to \textbf{adaptively} control the rank based on the constraint of the Bellman equation and preserve desirable representation ranks. Specifically, the BEER regularizer only reduces the similarity (thereby increasing the representation rank) when the similarity exceeds the upper bound. Numerical experiments, e.g., in \cref{fig: illustrative experiments Lunar Lander}, show that our method outperforms existing approaches that do not control or overly strengthen the representation rank. For instance, we demonstrate that algorithms like InFeR, which keep maximizing the representation rank without constraint, produce high approximation errors. In contrast, our proposed regularizer, BEER, functions significantly better by adaptively regularizing the representation rank. We further scale up BEER to challenging continuous control suite DMcontrol~\citep{dm-control}, where BEER performs better than the existing methods DR3 and InFeR.

Our main contribution is three folds:
\begin{enumerate}
    \item We discover the implicit constraints on representations of the Bellman equation and establish an upper bound on the cosine similarity.
    \item We find a theoretical principle to maintain the representation rank adaptively. We design a novel, theoretically-grounded regularizer called BEER. It controls the representation rank by adhering to the constraints imposed by the Bellman equation.
    \item The empirical experiments validate that the BEER regularizer can help models maintain a balanced representation rank thereby further enhancing the DRL agents' performance. We scale up BEER to 12 challenging continuous control tasks, and BEER outperforms existing approaches by a large margin.
\end{enumerate}

\begin{figure*}[!ht]
\centering
\vspace{-0.2in}
\hspace{-0.1in}
\subfigure[Lunar Lander\label{fig: sub lunarlander}]{
\includegraphics[width=0.25\textwidth]{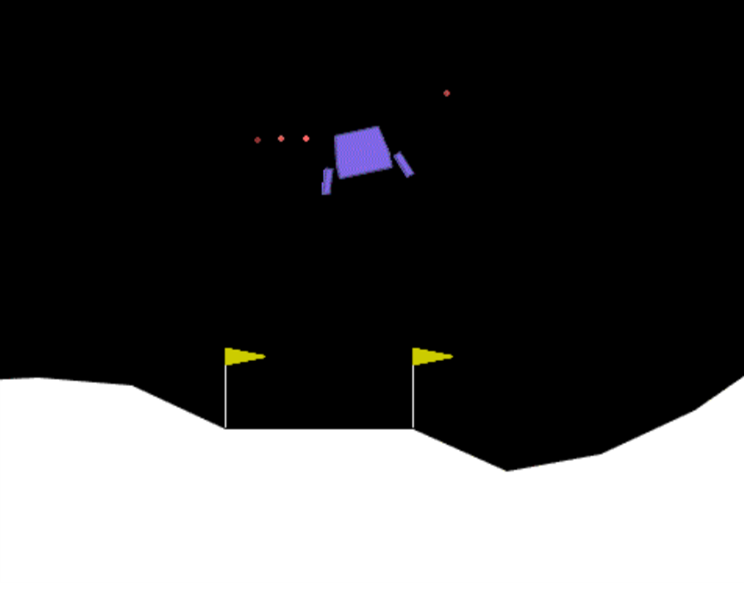}}
\hspace{-0.1in}
\subfigure[Representation Rank\label{fig: sub lunarlander repr rank}]{
\includegraphics[width=0.25\textwidth]{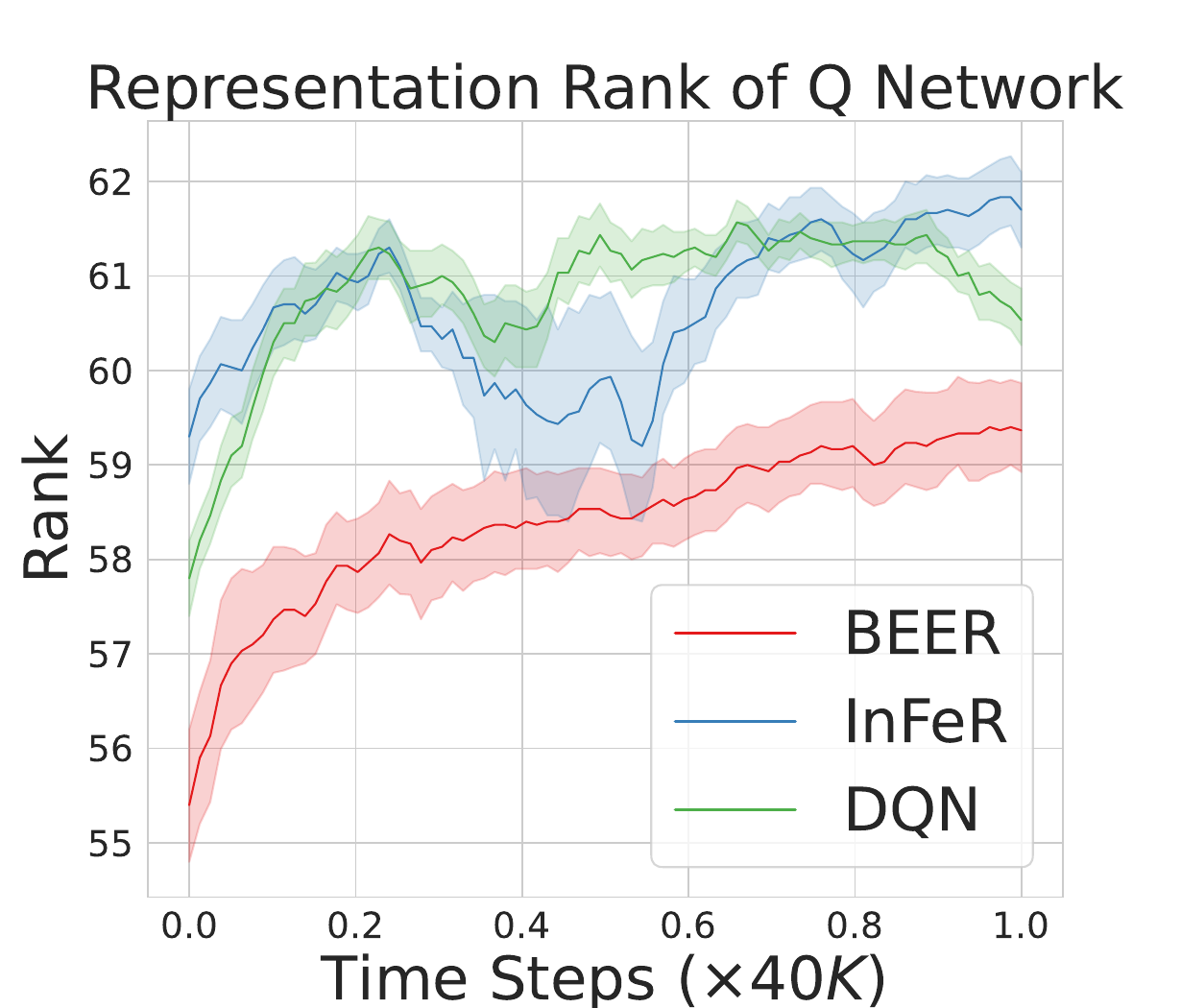}}
\hspace{-0.1in}
\subfigure[Approximation Error\label{fig: sub lunarlander estimation error}]{
\includegraphics[width=0.25\textwidth]{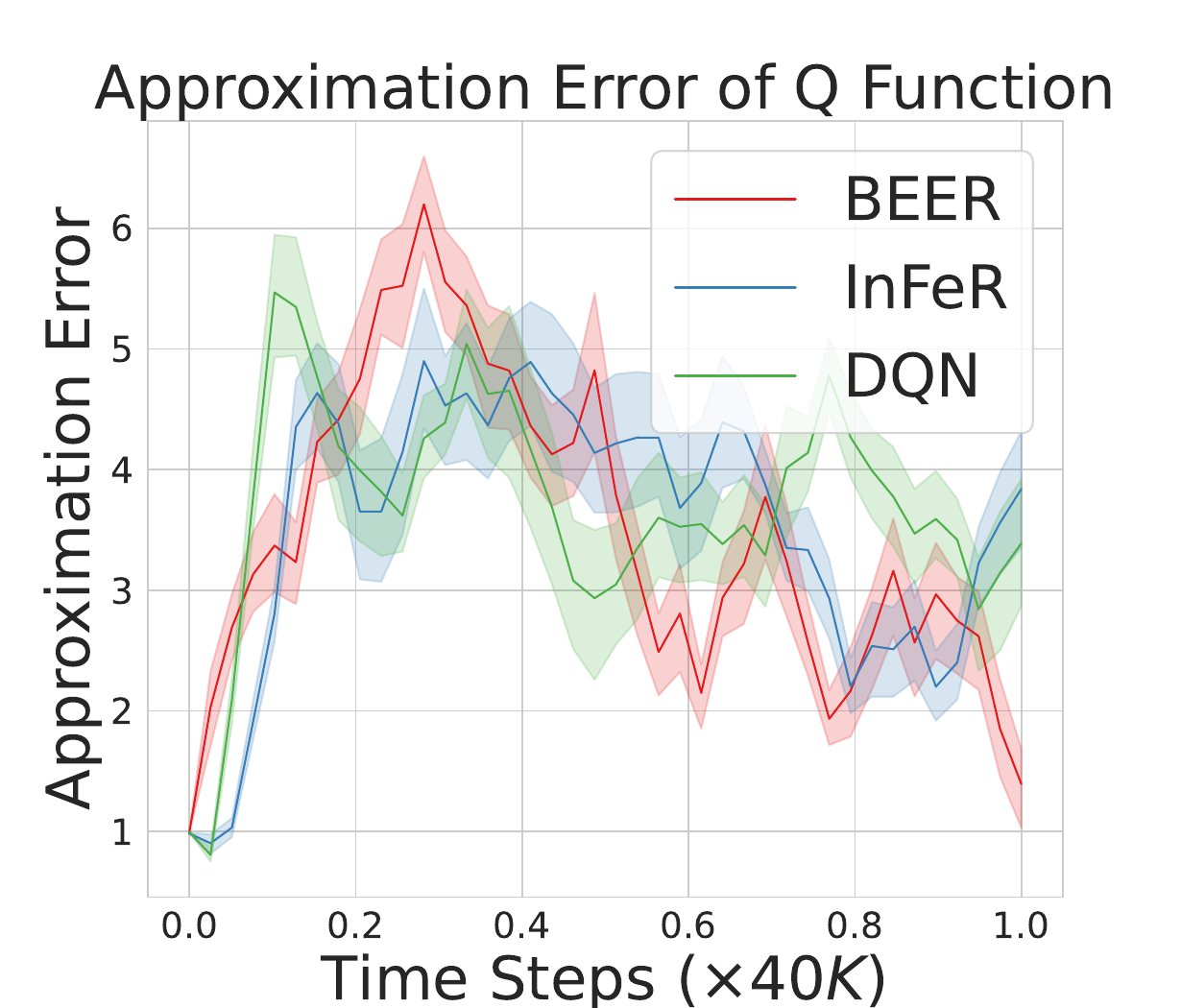}}
\hspace{-0.1in}
\subfigure[Performance\label{fig: sub lunarlander performance}]{
\includegraphics[width=0.25\textwidth]{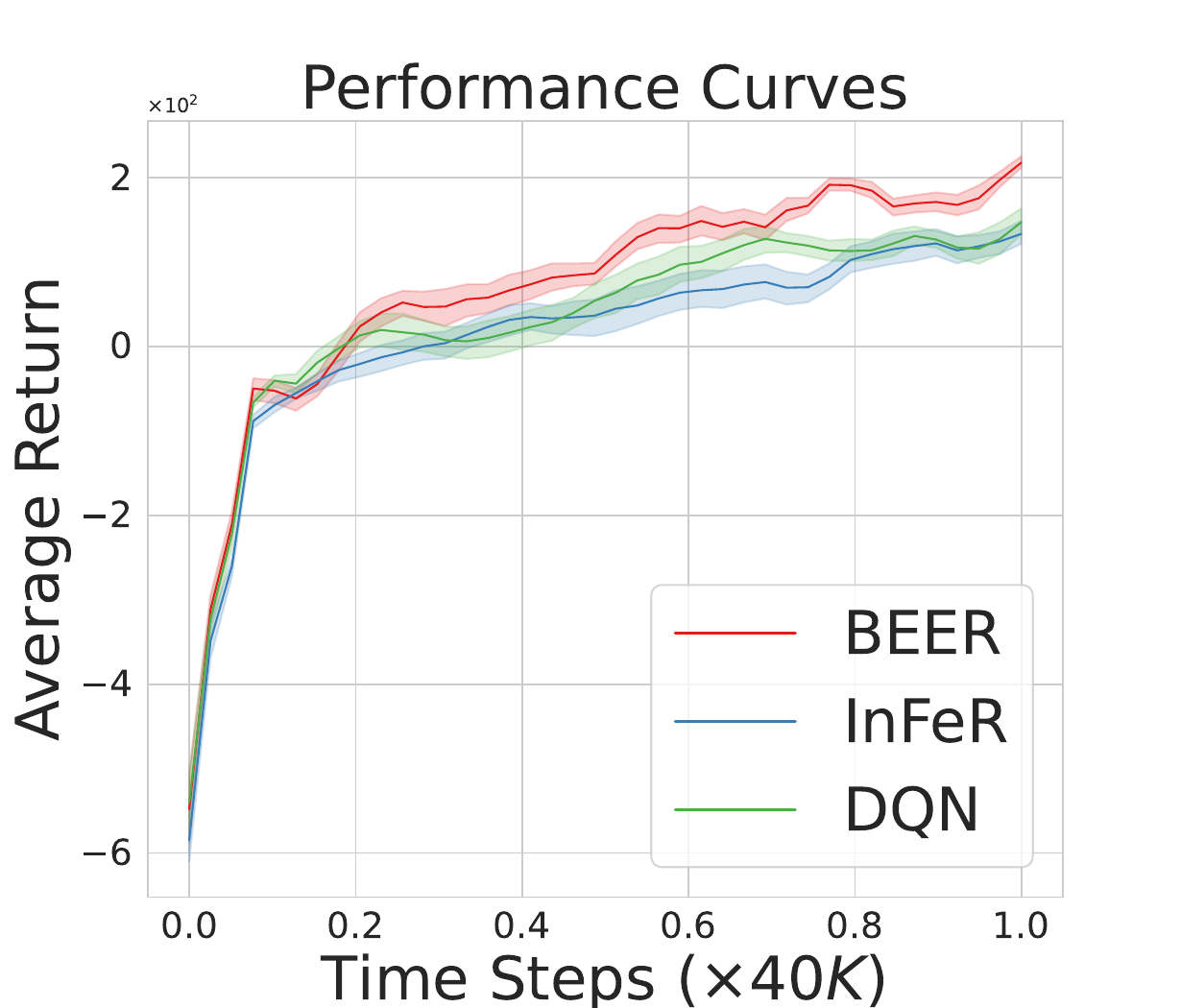}}
\vspace{-0.1in}
\caption{Illustrative experiments on the Lunar Lander environment, with results averaged over ten random seeds. The shaded area represents half a standard deviation. (a) A snapshot of the Lunar Lander environment. (b) Comparison of representation ranks. BEER exhibits a more balanced rank compared to InFeR and DQN. (c) Approximation errors of different algorithms. BEER displays a lower approximation error compared to both DQN and InFeR in the latter stage~(0.9 to 1 $\times 40K$ time steps). (d) Performance curves substantiating the superiority of BEER. \label{fig: illustrative experiments Lunar Lander}}
\vspace{-0.2in}
\end{figure*}

\section{\label{sec: preliminaries}Preliminaries}
This paper employs a six-tuple Markov Decision Process (MDP) $(\mathcal{S, A, }R, P, \gamma, \rho_0)$ to formalize RL, where $\mathcal{S}$ represents a state space, $\mathcal{A}$ denotes an action space, $R: \mathcal{S} \times \mathcal{A} \rightarrow \mathbb{R}$ a reward function, $P: \mathcal{S} \times \mathcal{A} \rightarrow p(s)$ is a transition kernel, $\gamma \in [0, 1)$ serves as a discount factor, and $\rho_0$ specifies an initial state distribution. The objective of RL is to optimize the policy through return, defined as $R_t=\sum_{i=t}^{T}\gamma^{i-t} r(s_i, a_i)$. The action value ($Q$) function measures the quality of an action $a$ given a state $s$ for a policy $\pi$, defined as 
\begin{equation}
 Q^\pi(s,a) = \mathbb{E}_{\tau \sim \pi, p} [R_{\tau} | s_0 = s, a_0 = a],
\end{equation}
where $\tau$ is a state-action sequence $(s_0, a_0, s_1, a_1, s_2, a_2 \cdots)$ generated by the policy $\pi$ and transition probability $P$.
The $Q$ value adheres to the Bellman equation~\citep{rl}:
\begin{equation}
Q^\pi(s,a) = r(s,a) + \gamma \mathbb{E}_{s',a'} [Q^\pi(s',a')],
\label{eq: bellman consistency equation}
\end{equation}
where $s' \sim P(\cdot|s,a)$ and $a \sim \pi(\cdot |s)$. 
In scenarios where the state or action space is prohibitively large, leveraging conventional tabular methods for value storage becomes computationally intractable. Function approximations such as NNs are commonly employed for value approximation in these cases.

It is instructive to adopt a matrix view for elaborating on the MDP~\citep{agarwal2019reinforcement}. Let $\mathbf{Q}^\pi \in \mathbb{R}^{(|\mathcal{S}|\cdot |\mathcal{A}|) \times 1}$ denote all Q values, and $\mathbf{r}$ as vectors of the same shape. We extend notation $P^\pi$ to represent a matrix of dimension $(|\mathcal{S}|\cdot |\mathcal{A}|) \times (|\mathcal{S}|\cdot |\mathcal{A}|)$, where each entry is $P^{\pi}_{(s,a),(s',a')} \coloneqq P(s'|s,a)\pi (a'|s')$, where $P^\pi$ is induced by a stationary policy $\pi$. It is straightforward to verify the matrix form of the Bellman equation~\citep{agarwal2019reinforcement}
\begin{equation}
\label{eq: bellman eq matrix form}
    \mathbf{Q}^\pi = \mathbf{r} + \gamma P^\pi \mathbf{Q}^\pi.
\end{equation}

\textbf{Representations of value functions.} In the DRL setting, we express the $Q$ function as $Q^\pi(s,a) = \phi^\top(s,a) w$, where $\phi(s,a) \in \mathbb{R}^{N \times 1}$ denotes a feature representation of dimension N and $w \in \mathbb{R}^{N \times 1}$ is a weight vector. Note that $w$ is not related to the input pair $(s,a)$. In this setting, $\phi(s,a)$ can be considered as the output of the penultimate layer of a neural network, and $w$ corresponds to the weight of the network's final layer. To assess the expressiveness of the value function representation, we adopt the concept of the ``representation rank" as introduced by \citet{lyle2021understanding}. The representation rank is formally defined as follows.
\begin{definition}[Numerical representation rank]
\label{definition: representation rank}
    Let $\phi: \mathcal{S} \times \mathcal{A} \mapsto \mathbb{R}^N$ be a representation mapping. Let $ X_n \subset \mathcal{S} \times \mathcal{A}$ be a set of $n$ state-action pairs samples drawn from a fixed distribution $p$. The representation rank of the value function is defined as 
\begin{equation}
\label{eq: definition of representation rank}
    \text{Rank}(\phi, X_n; \epsilon, n) = \mathbb{E}_{X_n \sim p}  \big| \{\sigma \in  \text{Spec}\phi(\frac{1}{\sqrt{n}}X_n): |\sigma|>\epsilon   \} \big|,
\end{equation}
where Spec denotes the singular spectrum containing the singular values, produced by performing a singular value decomposition on matrix $\phi(X_n)$. 
\end{definition}
We set $p$ to be a uniform distribution over $\mathcal{S} \times \mathcal{A}$ for the scope of this paper. In the context of a finite state space $X_n$ and $\epsilon = 0$, the numerical representation rank —hereafter referred to as the "representation rank"— corresponds to the dimension of the subspace spanned by the representations. Selecting $\epsilon >0$ filters out the small singular values. Therefore, one can determine the representation rank by calculating the singular spectrum of a representation matrix, which is constructed by concatenating the sampled state-action pairs

\section{Method}
In pursuit of a principled approach to regulating the representation rank, we deviate from the common practice of directly manipulating phenomena such as primacy bias~\citep{primacybias} or dormant neurons~\citep{dormantneurons} within NNs. While such heuristics offer valuable insights, they seldom provide generalizable theoretical underpinnings. One of the challenges in deriving theoretical insights from NNs is their inherent flexibility, which often leaves little room for rigorous analysis. To circumvent this limitation, we incorporate the Bellman equation in the context of NNs for DRL. The intersection of Bellman equations and NNs offers a fertile analysis ground, thereby facilitating the extraction of solid guarantees. We commence our discussion with an analytical examination of the Bellman equation and then derive an upper bound for the inner product of two consecutive representations of value networks. This upper bound is immediately transformable into a new bound that constrains the cosine similarity of the representations. Notably, the cosine similarity implicitly restricts the representation rank. Motivated by the theoretical analysis, we introduce the BEER regularizers to adaptively control the representation rank. We consider a value function approximation problem with constraints. Then the BEER regularizer serves as a penalty to solve the optimization problem. Contrary to methods that unboundedly maximize the representation rank~\citep{lyle2021understanding,dr3,he2023frustratingly}, our strategy imposes a well-calibrated constraint, which is instrumental in achieving a more nuanced and effective regularization.

\subsection{\label{sec: dot similarity theory}
Upper bounds on similarity measures for adjacent representations}
We examine the Bellman equation through the interplay between Q values and their underlying representations within DRL. The view based on the Q value can be described as an inner product between the representation vector and a corresponding weight vector. This allows us to rewrite the Bellman equation as
\begin{equation}
    \textcolor{modcolor}{\Phi} ^\top w = \mathbf{r} + \gamma P^\pi \textcolor{modcolor}{\Phi}^\top w.
\end{equation}
Inspired by \cref{eq: bellman eq matrix form}, we isolate the term involving the representation $\Phi$ on one side of the equation. This manipulation yields an inter-temporal relationship between $\Phi$ and its successor, i.e.,
\begin{equation}
    (\Phi^\top - \gamma P^\pi \Phi^\top)w = \mathbf{r}.
\end{equation}
Taking $L_2$-norm of both sides results in
\begin{equation}
    \| (\Phi^\top - \gamma P^\pi \Phi^\top)w \|_2 = \| \mathbf{r} \|_2.
\end{equation}
Given the intractability of finding an optimal policy in DRL if either the representation function or the weight vector becomes trivial (e.g., zero vector), we introduce the following assumption.
\begin{assumption}
\label{assumption: function and weight}
    Both the representation function $\phi$ and the weight vector $w$ are non-trivial; specifically, $\phi$ is not a constant function, and $w$ is not a zero vector.
\end{assumption}
With~\cref{assumption: function and weight} and by utilizing the definition of the operator norm~\citep{operator-norm}, we arrive at
\begin{equation}
\label{eq: operator norm inequality for representation}
    \| \Phi^\top - \gamma P^\pi \Phi^\top \|_{op}  \geq \frac{\| \mathbf{r} \|_2}{\|w\|_2},
\end{equation}
where $\|\cdot \|_{op}$ symbolizes the operator norm.
\Cref{eq: operator norm inequality for representation} shows an implicit requirement of the representation $\Phi$ originating from the Bellman equation, which inspires us to re-consider the Bellman equation in \cref{eq: bellman consistency equation}. We derive the following theorem constraining the inner product of two adjacent representations of value functions.
\begin{restatable}{theorem}{DotSimilarityUpperBound}
\label{theorem: dot similarity}
 Under \cref{assumption: function and weight}, given Q value $Q(s,a)=\phi(s,a)^\top w$, 
    \begin{equation}\label{eq: dot similarity upper bound constraint}
        \langle \phi(s,a), \overline{\phi(s',a')} \rangle  \leq (\| \phi(s,a) \|^2 + \gamma^2\| \overline{\phi(s',a')} \|^2 - \frac{\|r\|^2}{\|w\|^2} )\frac{1}{2\gamma},
    \end{equation}
where $\langle \cdot, \cdot \rangle$ represents the inner product, $\phi$ is a state-action representation vector, $w$ is a weight, and \textcolor{modcolor}{$\overline{\phi(s',a')} = \mathbb{E}_{s',a'} \phi(s',a')$} denotes the expectation of the representation of the next state action pair.
\end{restatable}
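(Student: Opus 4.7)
The plan is to reduce the Bellman equation, in its pointwise form, to an inequality on the single squared distance between the representation $\phi(s,a)$ and its expected successor $\overline{\phi(s',a')}$; the dot-product bound then falls out by expanding the square and rearranging. First I would substitute the linear factorization $Q(s,a) = \phi(s,a)^\top w$ into \eqref{eq: bellman consistency equation}. Since the weight $w$ does not depend on $(s',a')$, the expectation commutes with the inner product, giving
\begin{equation*}
\phi(s,a)^\top w \;=\; r(s,a) + \gamma \, \overline{\phi(s',a')}^\top w,
\end{equation*}
which I would rearrange into the scalar identity $\bigl(\phi(s,a) - \gamma\,\overline{\phi(s',a')}\bigr)^\top w = r(s,a)$. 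The key conceptual move here is that this collapses the stochastic Bellman equation into a statement about a \emph{single} deterministic vector.

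Next I would apply Cauchy--Schwarz to that identity, which is legitimate because \cref{assumption: function and weight} guarantees $\|w\| > 0$. Dividing by $\|w\|$ yields
\begin{equation*}
\bigl\|\phi(s,a) - \gamma\,\overline{\phi(s',a')}\bigr\|_2 \;\geq\; \frac{|r(s,a)|}{\|w\|_2}.
\end{equation*}
Squaring both sides, expanding the left-hand side via the inner-product identity as $\|\phi(s,a)\|^2 - 2\gamma\,\langle \phi(s,a), \overline{\phi(s',a')}\rangle + \gamma^2\,\|\overline{\phi(s',a')}\|^2$, isolating the inner-product term, and dividing through by $2\gamma>0$ recovers the claimed upper bound.

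The main obstacle is conceptual rather than computational: the expectation must be treated carefully. The statement is in terms of the expected next-representation $\overline{\phi(s',a')}$, not a pointwise $\phi(s',a')$, so the expectation has to be absorbed into the vector \emph{before} Cauchy--Schwarz is applied. Doing so is valid precisely because $w$ is state-action independent, so the identity $\mathbb{E}_{s',a'}[\phi(s',a')^\top w] = \overline{\phi(s',a')}^\top w$ holds exactly, with no Jensen slack. A minor bookkeeping note is that the theorem writes $\|r\|^2$ where $r=r(s,a)$ is scalar, which I read as $r(s,a)^2$; the result is then a pointwise inequality at each $(s,a)$, consistent with the matrix-form Bellman relation in \eqref{eq: bellman eq matrix form}.
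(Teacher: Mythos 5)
Your proposal is correct and follows essentially the same route as the paper's own proof: substitute the linear factorization into the Bellman equation, pull the expectation onto the representation vector (valid since $w$ is state--action independent), apply Cauchy--Schwarz to $(\phi(s,a)-\gamma\overline{\phi(s',a')})^\top w = r$, then square, expand, and rearrange. Your explicit remarks on absorbing the expectation before Cauchy--Schwarz and on reading $\|r\|$ as $|r(s,a)|$ are careful clarifications of steps the paper performs implicitly.
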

To keep the consistency, we defer all proofs to the Appendix. \Cref{theorem: dot similarity} states the relationship of representations at adjacent time steps. We derive the theorem from the Bellman equation as a necessary condition for optimality associated with the optimal value function. Hence, the upper bound in \cref{eq: dot similarity upper bound constraint} is also an interesting result concerning the optimal value networks. We can directly obtain the cosine similarity between the current representation and the one at the next step by dividing both sides of the \cref{eq: dot similarity upper bound constraint} by their norms.
\begin{remark}
    \label{remark: cos similarity}
    Under \cref{assumption: function and weight}, given Q value $Q(s,a)=\phi(s,a)^\top w$, then 
    \begin{equation}\label{eq: cos similarity upper bound constraint}
       \cos( \phi(s,a), \overline{\phi(s',a')} )  \leq (\| \phi(s,a) \|^2 + \gamma^2\| \overline{\phi(s',a')} \|^2 - \frac{\|r\|^2}{\|w\|^2} )\frac{1}{2\gamma \| \phi(s,a) \| \| \overline{\phi(s',a')} \| }.
    \end{equation}
\end{remark}
\Cref{remark: cos similarity} discusses the cosine similarity between two adjacent representations of the value function. The representation rank, as defined in \cref{definition: representation rank}, is determined by computing the singular spectrum, which quantifies the number of filtered singular values. Under these conditions, the cosine similarity is intrinsically linked to the representation rank. For example, if all vectors in the representation matrix are orthogonal cosine similarity is zero, then they are linearly independent. That means that the matrix is full rank. If some vectors of the representation matrix are linearly dependent, which means the cosine similarity taking 1, then they will contribute to a lower rank. Explicitly controlling cosine similarity allows for adaptive adjustment of the representation rank, a methodology also echoed in prior research~\citep{dr3, lyle2021understanding}.

\subsection{Adaptive regularization of representation rank}
Previous work (e.g., InFeR, DR3) generally unboundedly maximizes the representation rank by regularizing the representation explicitly. However, this strategy neglects a subtle yet critical trade-off inherent in DRL. A large representation rank can lead to an excessively complex representation function that can hinder the learning process~\citep{goodfellow2016deep}. Such complex models may not only demand a larger dataset for successful training but also exhibit high sensitivity to noise, impeding sample efficiency and overall performance.

Motivated by the relationship between cosine similarity and representation rank elucidated in~\cref{sec: dot similarity theory}, we propose an adaptive regularization technique. Our method uses the upper bounds derived in the preceding subsection to constrain the representation rank, conforming to the limitations stipulated by~\cref{eq: dot similarity upper bound constraint,eq: cos similarity upper bound constraint}. To this end, we introduce a hard constraint within the value function's optimization process. Formally,
\begin{equation}
\label{eq: value approx with constraint}
\begin{aligned}
    &\min_{\phi, w} \mathcal{L}_{VA}  (\theta) =  \frac{1}{2}  (\phi^\top w - \mathbb{SG} \phi'^{\top} w )^2 \\
    &\text{s.t.    }  \cos(\phi(s,a),\mathbb{SG}\overline{\phi(s',a')})   \leq (\| \phi(s,a) \|^2 + \gamma^2\| \overline{\phi(s',a')} \|^2 - \frac{\|r\|^2}{\|w\|^2} )\frac{1}{2\gamma \| \phi(s,a) \| \| \overline{\phi(s',a')} \| },
\end{aligned}
\end{equation}
where $\mathcal{L}_{VA}$ denotes standard value approximation loss and $\mathbb{SG}$ is the stopping gradient, reflecting the semi-gradient nature of the Bellman backup~\citep{rl}. One can interpret that equation as adding an explicit desirable hard condition to the learning process of the value network. The introduced constraint effectively ensures that the representations satisfy conditions derived from the original Bellman equation, thereby aiding the value network's learning process. Applying the Lagrange multiplier technique to solve this optimization problem is challenging because the original problem is not convex. Hence, we cannot solve the problem using the primal-dual method like ~\citet{sac} does. Consequently, we utilize a penalty regularizer. We introduce the following regularizer into the value network learning process:
\begin{equation}\label{eq: beer regularizer}
\begin{aligned}
    \mathcal{R}(\theta) = \text{ReLU}\Big( &\cos(\phi(s,a),\mathbb{SG}\overline{\phi(s',a')})  - \\
    & \textcolor{modcolor}{\mathbb{SG}}\big((\| \phi(s,a) \|^2 + \gamma^2\| \overline{\phi(s',a')} \|^2 - \frac{\|r\|^2}{\|w\|^2} )\frac{1}{2\gamma \| \phi(s,a) \| \| \overline{\phi(s',a')} \| } \big) \Big),
\end{aligned}
\end{equation}
where $\text{ReLU}(x) = \max\{0, x\}$. Incorporating this regularizing term offers generalized applicability across various DRL algorithms that utilize value approximation. Formally, the loss function can be rewritten
\begin{equation}
\label{eq: value approximation with beer}
    \mathcal{L}(\theta) = \mathcal{L}_{VA}(\theta
    ) + \beta \mathcal{R}(\theta),
\end{equation}
where $\beta$ serves as a hyper-parameter controlling the regularization effectiveness on the learning procedure. The working mechanism of the regularizer is as follows. When the input to the ReLU function is negative, it indicates the condition in~\cref{eq: dot similarity upper bound constraint} is fullfilled. Under such circumstances, the regularizer ceases to influence the optimization trajectory, thereby allowing the original value approximation loss, $\mathcal{L}_{VA}$, to exclusively govern the optimization process. Conversely, when the ReLU input is positive, the regularizer contributes to optimizing the value function and minimizes the cosine similarity to ensure compliance with~\cref{eq: dot similarity upper bound constraint}. To empirically validate the utility of the proposed BEER regularizer, we integrate it with DQN \citep{nature_dqn} for problems with discrete action spaces and with the Deterministic Policy Gradient (DPG) algorithm \citep{dpg} for those involving continuous action spaces. We summarize BEER based on DPG in~\cref{algorithm1: BEER}.

\begin{wrapfigure}{r}{0.5\textwidth}
\vspace{-10pt}
\begin{minipage}{0.5\textwidth}
\begin{algorithm}[H]
\caption{\label{algorithm1: BEER}BEER (based on DPG)}
\begin{algorithmic}[1]
    \State Init actor $\pi$, critic $Q$, targets $\pi'$, $Q'$ with random parameters $\theta, \varphi, \theta', \varphi'$, replay buffer $R$, noise distribution $\mathcal{N}(0,\sigma)$, regularization coefficient $\beta$, total timestep $T$
    \State Init environment, get state $s$
    \For{t = 1, T}
            \State $a_t = \pi_\theta(s_t) + \epsilon$, $\epsilon \sim \mathcal{N}$
            \State Execute $a_t$, get $r_t, s_{t+1}$
            \State Store $(s_t, a_t, r_t, s_{t+1})$ in $R$
            \State Sample batch from $R$
            \State Compute target value:
            \State $y_i = r_i + \gamma Q_{\varphi'}(s_{i+1}, \pi_{\theta'}(s_{i+1}))$
            \State Update critic parameters $\varphi$ by minimizing $\mathcal{L}(\theta) = \frac{1}{N}\sum_i(y_i - Q_\varphi(s_i, a_i))^2 + \beta \mathcal{R}(\varphi)$ (\cref{eq: value approximation with beer})
             \State Update policy parameters $\theta$ using DPG
            \State $\nabla_\theta J \approx \frac{1}{N}\sum_i \nabla_a Q_{\varphi}(s_i, a_i) \nabla_\theta \pi_\theta(s_i)$
            \State Update target network $\theta', \varphi'$
            \State $\theta' \leftarrow \tau\theta + (1 - \tau)\theta'$, $\varphi' \leftarrow \tau\varphi + (1 - \tau)\varphi'$
    \EndFor
\end{algorithmic}
\end{algorithm}
\end{minipage}
\vspace{-15pt}
\end{wrapfigure}

\subsection{\label{sec: illustrative example}An illustrative example}
Does a very large representation rank harm the learning process of value function? To validate the adaptive regularization of the representation rank of BEER, we perform illustrative experiments on the Lunar Lander task. In this task, a spacecraft shall safely land on a designated landing pad. A policy with a return greater than 200 is considered optimal. We evaluate three algorithms: DQN, InFeR, and a relaxed version of BEER, which we integrate into the DQN framework without additional modifications. Comprehensive details of the experimental setup appear in~\cref{app sec: illustrations}. The primary goal of a value network is a precise approximation of the optimal value function. Hence, we choose approximation error —a metric defined as the absolute difference between the estimated and the true value functions— as the principal criterion for model evaluation. This measure assesses the models' quality directly and precisely. 

\cref{fig: illustrative experiments Lunar Lander} shows the results. Initially, we quantify the representation rank for each algorithm as appears in~\cref{fig: sub lunarlander repr rank}. We observe that BEER's representation rank is lower than those of both InFeR and DQN, thereby suggesting that the model learned with BEER is less complex. Then, we compute the approximation error. During the latter stage of the training process (0.9 to 1 $\times 40K$ time steps), BEER displays a substantially lower approximation error compared to both DQN and InFeR; that is, BEER generates a more accurate and reliable value function model. \Cref{fig: sub lunarlander performance} further confirms the superior performance of BEER. In brief, the experimental results verify that BEER improve the complexity of the model as originally envisioned by adaptively controlling the representation rank.
\section{\label{sec: exp}Experiments}
%
\begin{figure*}[!htbp]
\centering
\hspace{-0.1in}
\subfigure[Grid World\label{fig: sub gridworld}]{
\includegraphics[width=0.25\textwidth]{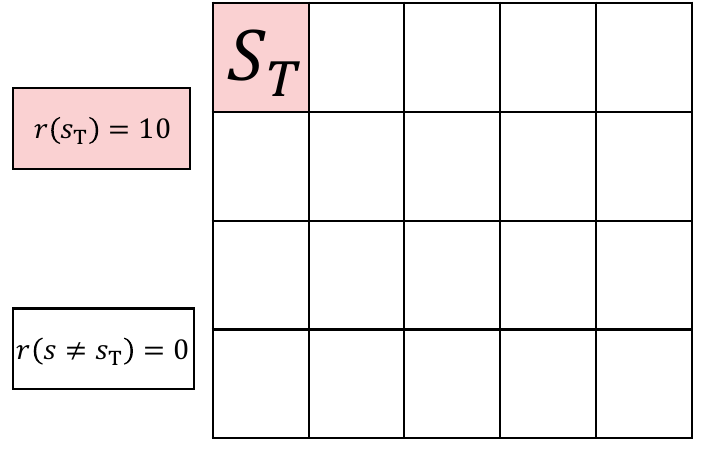}}
\hspace{-0.1in}
\subfigure[Representation Rank\label{fig: sub gridworld repr rank}]{
\includegraphics[width=0.25\textwidth]{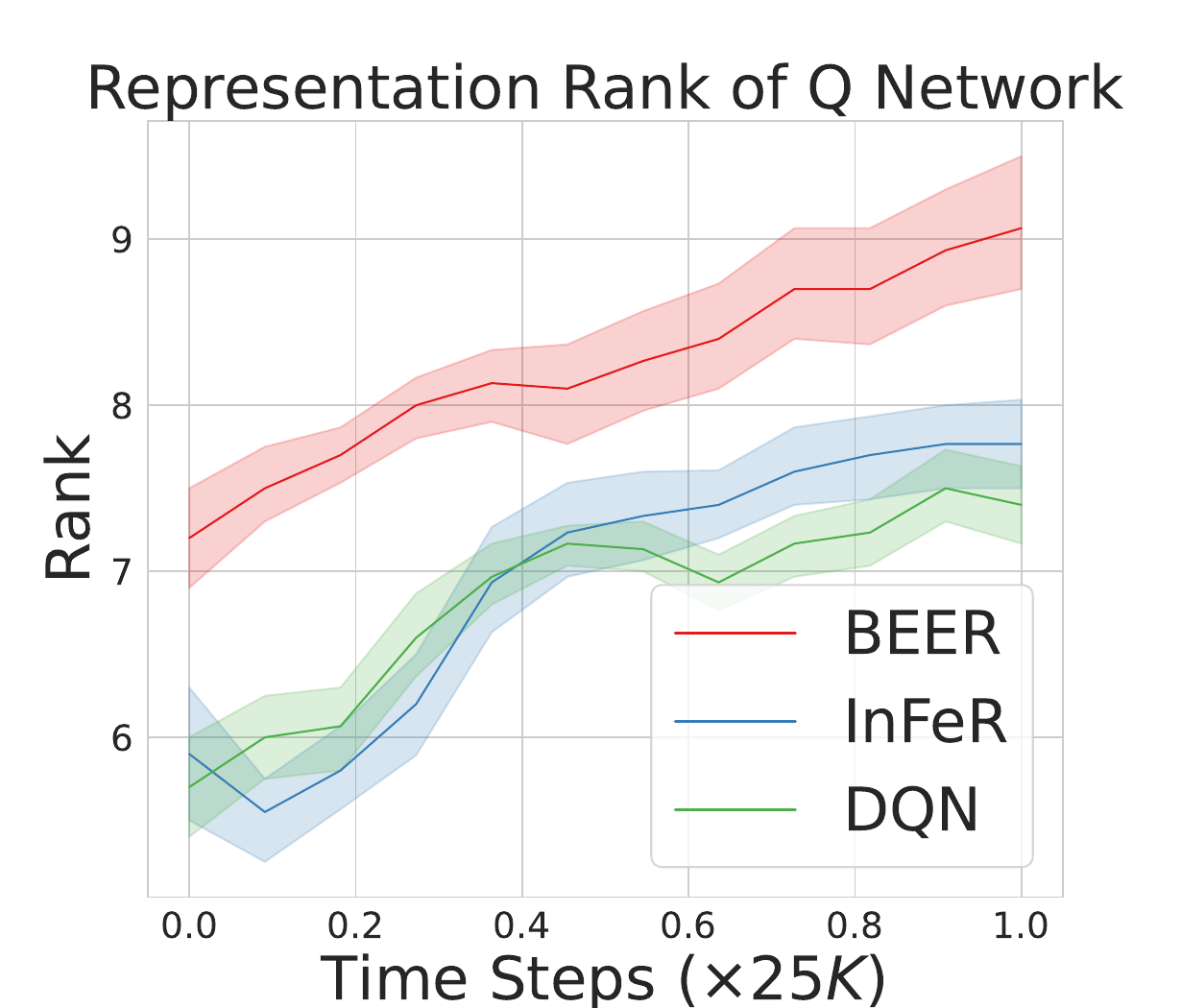}}
\hspace{-0.1in}
\subfigure[Approximation Error\label{fig: sub gridworld estimation error}]{
\includegraphics[width=0.25\textwidth]{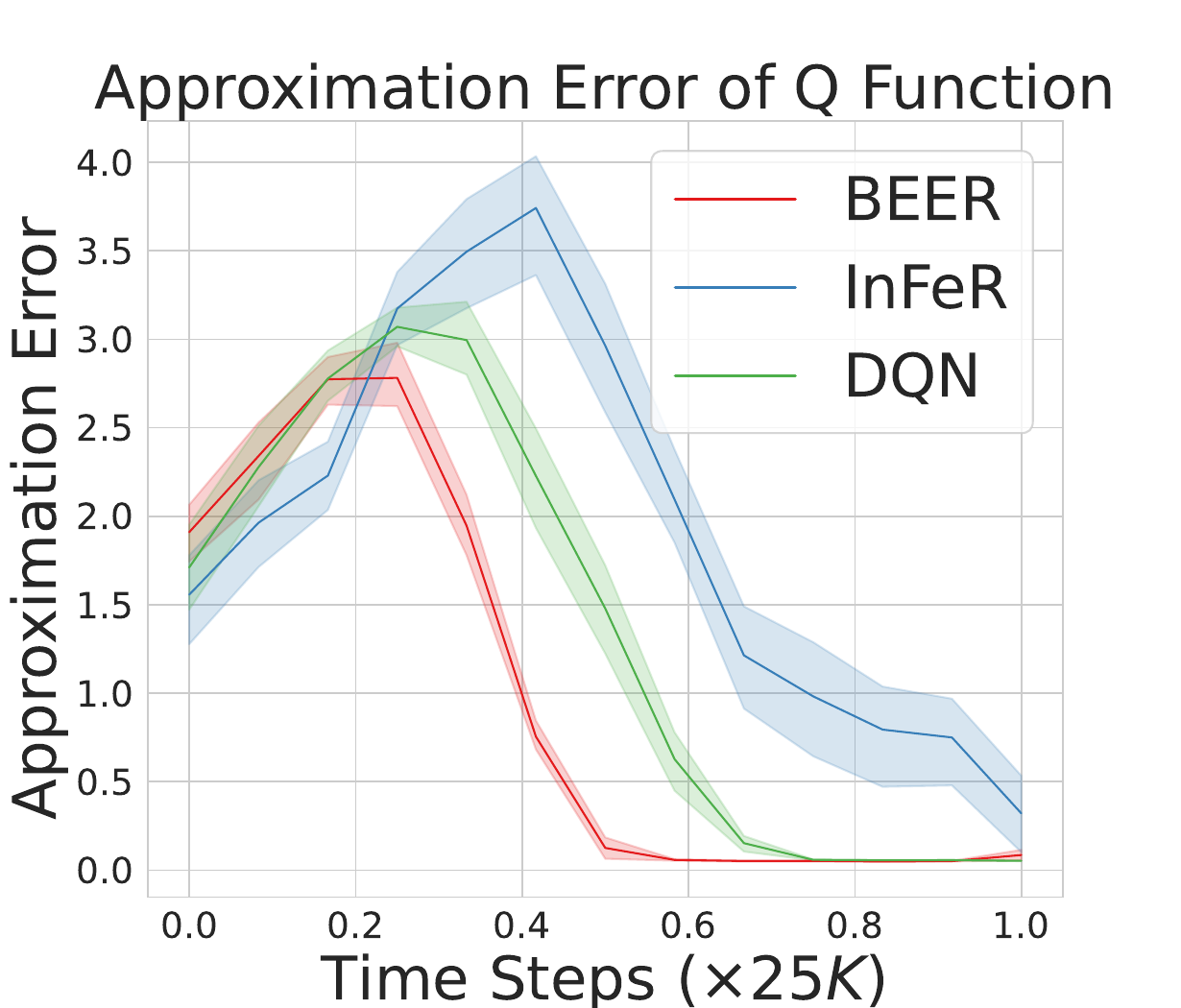}}
\hspace{-0.1in}
\subfigure[Performance\label{fig: sub gridworld performance}]{
\includegraphics[width=0.25\textwidth]{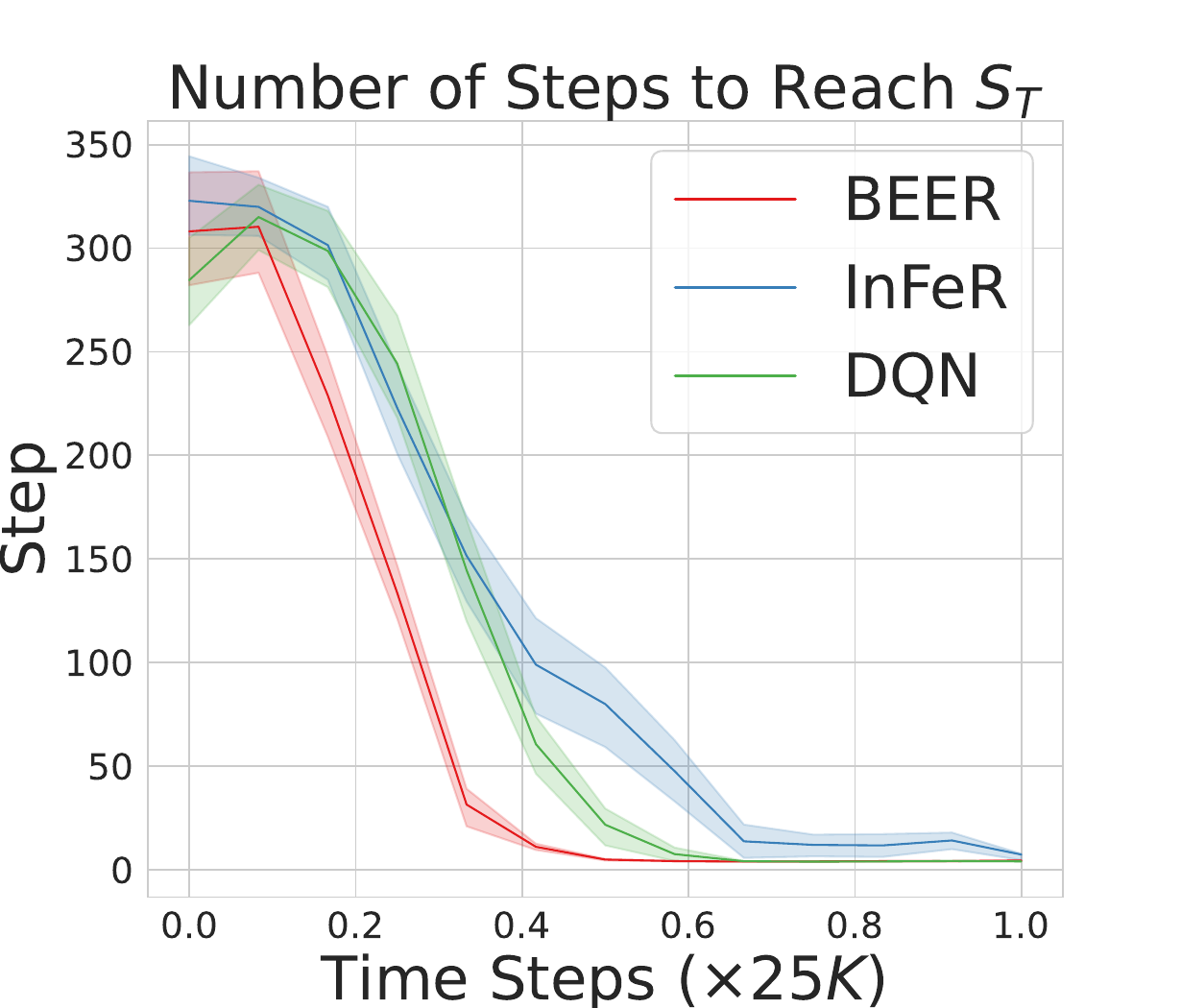}}
\caption{Illustrative experiments on the Grid World task. We report the results over twenty random seeds. The shaded area represents a half standard deviation. (a) The grid world task. The initial state follows a uniform distribution over state space. The objective is to arrive at state $S_T$, which results in a reward (=10); Otherwise, it is zero. (b) Representation rank of tested algorithms. Our proposal, BEER, has the highest representation rank compared to InFeR and DQN. (c) Approximation error. The error of BEER is lower than that of InFeR and DQN. (d) The BEER algorithm requires the least time to reach $S_T$, i.e., it learns faster than the benchmarks. 
\label{fig: illustrative experiments grid world}}
\end{figure*}

In this section, we evaluate BEER concerning its regularization of representation rank and its empirical performance. Our evaluation focuses on three primary research questions. BEER algorithm tackles overly complex models with a potential high representation rank. Thus, one is interested in \textbf{i)} How does BEER perform in relatively straightforward tasks, such as the grid world environment? \textbf{ii)}  Can BEER scale effectively to over-parameterized, high-complexity tasks? By over-parameterized tasks, we specifically refer to agents on DMControl tasks~\citep{dm-control}, where the agents are generally over-parameterized~\citep{lutter2021learning,dr3}. \textbf{iii)} How does the approximation error of BEER compare to that of existing methods? A lower approximation error indicates a superior model in terms of complexity and performance. To ensure the reproducibility and fairness of our experiments~\citep{drl_matters}, all tests are based on ten random seeds. Besides, our BEER implementation does not incorporate any engineering trick that could potentially boost performance. In all experiments, we refrain from fine-tuning the hyperparameter $\beta$. We consistently use a value of $\beta=1e-3$. We elaborate the experimental configurations in \cref{app sec: illustrations} to save space. The influence of the regularization coefficient on BEER's performance, as well as the experimental results of BEER on relatively simpler tasks, are presented in \cref{app sec: coefficient of beer,app sec: exp on simple tasks}, respectively. The impact of cosine similarity on rank representation is discussed in \cref{app sec: relationship between cos and rank}. Whether the effects of the BEER regularizer are implicitly present in the value approximation algorithm is deliberated upon in \cref{app sec: is beer in value approximation?}.
\subsection{Effectiveness of BEER on simple environments}
Here, we study the effectiveness of BEER in a simple environment with the grid world task as shown in~\cref{fig: illustrative experiments grid world}. As illustrated in~\cref{fig: sub gridworld repr rank}, BEER achieves a higher representation rank than the existing methods such as InFeR and DQN. Additionally, as shown in~\cref{fig: sub gridworld estimation error}, the approximation error of BEER is observably lower than other algorithms. Consequently, BEER also outperforms these methods concerning the number of steps required to reach the terminal state $S_T$~(see \cref{fig: sub gridworld performance}). Combined with the illustrative experiment in~\cref{sec: illustrative example}, these results demonstrate the adaptive regularization capability of BEER both on simple and complex tasks, which reflects our primary objective to design BEER.

\begin{figure*}[!htb]
\centering
\scalebox{1}{
\includegraphics[width=1.0 \textwidth]{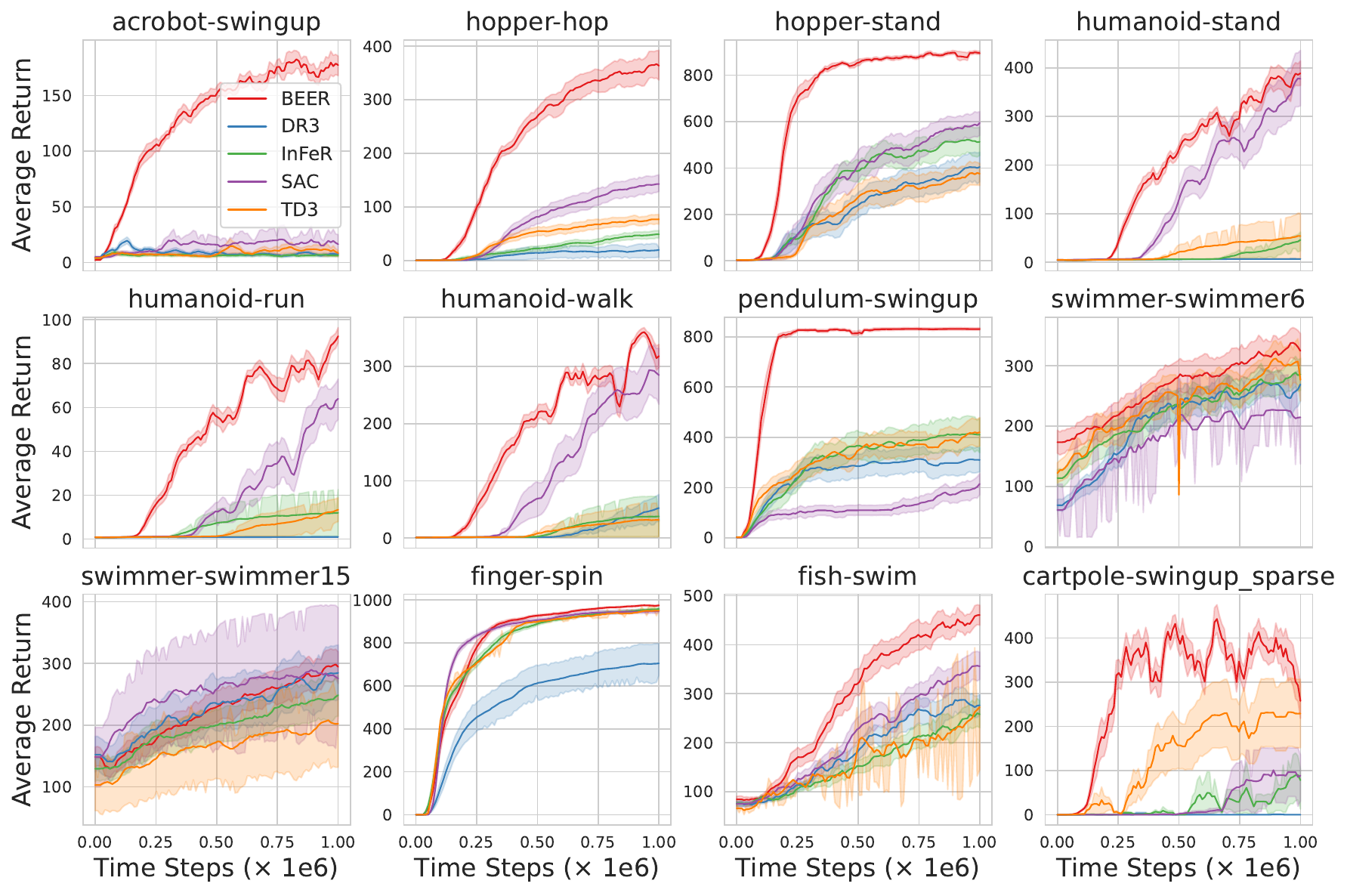}}
\caption{\label{fig: beer performance}Performance curves for OpenAI gym continuous control tasks on DeepMind Control suite. The proposed algorithm, BEER, outperforms other tested algorithms significantly. The shaded region represents half of the standard deviation of the average evaluation over 10 seeds. The curves are smoothed with a moving average window of size ten.}
\vspace{0.in}
\end{figure*}


\subsection{Scaling up to complex control tasks}
DMControl \citep{dm-control} serves as a standard benchmark suite for evaluating the capabilities of DRL agents in complex, continuous control tasks \citep{laskin2020curl,rad}. We further extend BEER's applicability to continuous control tasks and demonstrate the performance of the BEER regularizer on the 12 challenging~\citep{pytorch_sac} DMControl tasks, where we combine BEER with the deterministic policy gradient method~\citep{dpg, ddpg}. Our primary objective in this study is to investigate the BEER's efficacy concerning automatic regularization of representation rank. We select the following baseline algorithm: i) DR3~\citep{dr3}, which unboundedly maximizes representation rank by maximizing the inner product of two adjacent representations of the Q network; ii) InFeR, which consistently maximizes the rank with the help of designed auxiliary task; iii) TD3~\citep{td3} and iv) SAC, both of which have already shown robust performance a spectrum of tasks. The performance curve, illustrated in~\cref{fig: beer performance}, demonstrates the sample efficiency and superior performance of BEER. We report the best average scores over the 1 million time steps in~\cref{table: BEER performance statistics}, where the best performance of BEER outperforms the other tested algorithm by a large margin. These results demonstrate the sample efficiency and performance of BEER.

\begin{table*}[!ht]
    \centering
    \caption{\label{table: BEER performance statistics} 10-Seed peak scores over 1M Steps on DMC. BEER demonstrates the best performance on the majority (\textbf{12} out of \textbf{12}) tasks by a large margin. Specifically, the BEER algorithm outperforms DR3, InFeR, SAC, and TD3 by 140\%, 96.8\%, 60\%, and 101.9\%, respectively. The best score is marked with the \colorbox{mine}{blue} color box. $\pm$ corresponds to a standard deviation over ten trials. }
\resizebox{\textwidth}{!}{
\begin{tabular}{ll|lllll}
\toprule
\textbf{Domain} & \textbf{Task} &\textbf{BEER} &\textbf{DR3} &\textbf{InFeR} &\textbf{SAC} &\textbf{TD3}\\
\midrule
Acrobot & Swingup  &  \colorbox{mine} {260.5}{\footnotesize  $\pm$ 42.9} &\;46.2 {\footnotesize $\pm$ 14.0} &\;18.9 {\footnotesize $\pm$ 13.6} &\;43.2 {\footnotesize $\pm$ 63.5} &\;32.3 {\footnotesize $\pm$ 27.4}\\
Hopper & Hop  &  \colorbox{mine} {383.6}{\footnotesize  $\pm$ 115.8} &\;22.0 {\footnotesize $\pm$ 50.0} &\;58.1 {\footnotesize $\pm$ 37.6} &\;149.0 {\footnotesize $\pm$ 73.6} &\;91.2 {\footnotesize $\pm$ 46.9}\\
Hopper & Stand  &  \colorbox{mine} {929.3}{\footnotesize  $\pm$ 29.7} &\;465.6 {\footnotesize $\pm$ 276.3} &\;563.4 {\footnotesize $\pm$ 256.5} &\;650.5 {\footnotesize $\pm$ 238.5} &\;443.0 {\footnotesize $\pm$ 208.6}\\
Humanoid & Stand  &  \colorbox{mine} {471.3}{\footnotesize  $\pm$ 92.5} &\;7.4 {\footnotesize $\pm$ 1.0} &\;50.8 {\footnotesize $\pm$ 87.9} &\;418.1 {\footnotesize $\pm$ 278.7} &\;58.0 {\footnotesize $\pm$ 151.6}\\
Humanoid & Run  &  \colorbox{mine} {107.1}{\footnotesize  $\pm$ 10.6} &\;1.1 {\footnotesize $\pm$ 0.2} &\;12.4 {\footnotesize $\pm$ 32.9} &\;78.7 {\footnotesize $\pm$ 43.4} &\;15.1 {\footnotesize $\pm$ 29.5}\\
Humanoid & Walk  &  \colorbox{mine} {393.2}{\footnotesize  $\pm$ 38.2} &\;55.8 {\footnotesize $\pm$ 107.2} &\;39.0 {\footnotesize $\pm$ 108.9} &\;329.0 {\footnotesize $\pm$ 202.0} &\;33.5 {\footnotesize $\pm$ 92.0}\\
Pendulum & Swingup  &  \colorbox{mine} {833.2}{\footnotesize  $\pm$ 22.0} &\;331.9 {\footnotesize $\pm$ 228.8} &\;456.4 {\footnotesize $\pm$ 315.5} &\;270.7 {\footnotesize $\pm$ 228.1} &\;453.8 {\footnotesize $\pm$ 241.9}\\
Swimmer & Swimmer6  &  \colorbox{mine} {398.0}{\footnotesize  $\pm$ 123.8} &\;321.1 {\footnotesize $\pm$ 100.4} &\;332.8 {\footnotesize $\pm$ 125.3} &\;243.8 {\footnotesize $\pm$ 74.4} &\;321.9 {\footnotesize $\pm$ 148.4}\\
Swimmer & Swimmer15  &  \colorbox{mine} {345.5}{\footnotesize  $\pm$ 110.2} &\;320.6 {\footnotesize $\pm$ 165.6} &\;283.8 {\footnotesize $\pm$ 155.4} &\;314.1 {\footnotesize $\pm$ 198.5} &\;226.7 {\footnotesize $\pm$ 177.9}\\
Finger & Spin  &  \colorbox{mine} {983.6}{\footnotesize  $\pm$ 6.8} &\;715.2 {\footnotesize $\pm$ 387.5} &\;966.0 {\footnotesize $\pm$ 21.8} &\;956.5 {\footnotesize $\pm$ 43.0} &\;957.9 {\footnotesize $\pm$ 26.9}\\
Fish & Swim  &  \colorbox{mine} {573.2}{\footnotesize  $\pm$ 103.4} &\;377.5 {\footnotesize $\pm$ 123.4} &\;335.7 {\footnotesize $\pm$ 133.9} &\;418.2 {\footnotesize $\pm$ 127.1} &\;316.0 {\footnotesize $\pm$ 124.6}\\
Cartpole & SwingupSparse  &  \colorbox{mine} {750.8}{\footnotesize  $\pm$ 61.8} &\;15.0 {\footnotesize $\pm$ 39.1} &\;148.8 {\footnotesize $\pm$ 235.2} &\;147.6 {\footnotesize $\pm$ 295.5} &\;235.0 {\footnotesize $\pm$ 356.9}\\
\midrule
Average & Score & \colorbox{mine} {535.8 } &\;223.3 &\;272.2 &\;335.0 &265.4\\
\bottomrule
\end{tabular}
}
\vspace{-0.0in}
\end{table*}
\begin{figure*}[!htbp]
\centering
\scalebox{1}{
\includegraphics[width=1.0 \textwidth]{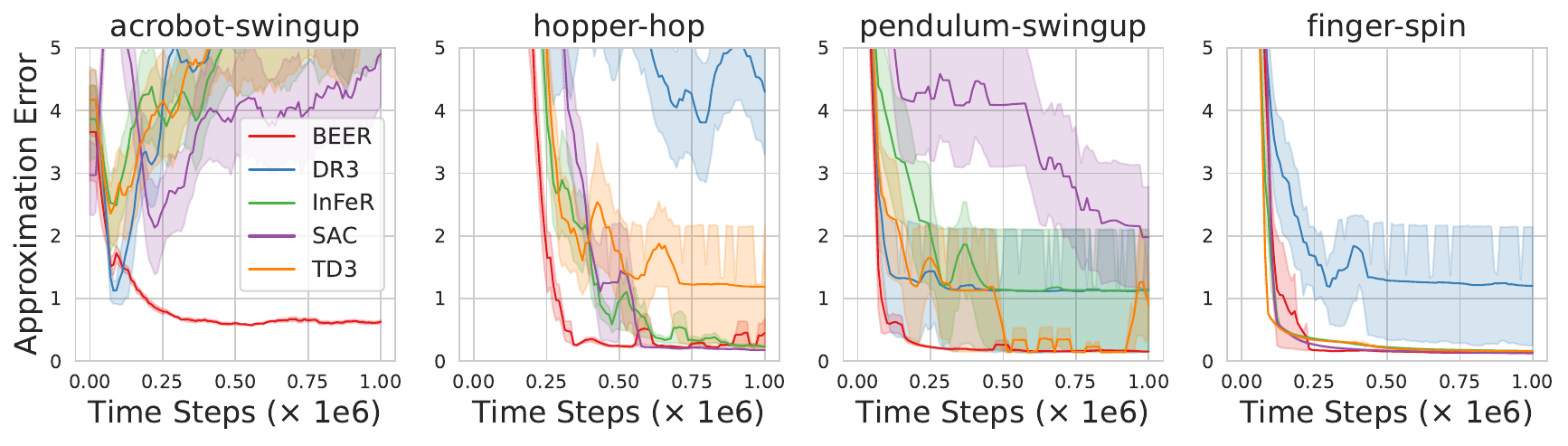}}
\caption{\label{fig: approximation error 4}Approximation error curves. The results demonstrate that the approximation error of BEER is empirically minimal when compared to other algorithms.}
\end{figure*}
\subsection{BEER reduces approximation error.}
The results discussed above establish the superiority of BEER that stems from the adaptive regularization of representation rank. We continue to validate the quality of the learned models in terms of approximation error. We select four representative tasks to measure the models, where BEER has a better performance on three and a matching performance on one of the tasks (finger-spin). As demonstrated in~\cref{fig: approximation error 4}, BEER maintains a lower approximation error than the baselines. The results are consistent with our previous theoretical analysis, thus validating its utility.
\section{Related Work}
 In this section, we dive into the intersection of RL with NNs and the growing emphasis on NNs-centric DRL. The differences between our work and previous studies are also discussed in this section.
\subsection{RL with NNs}
Prior to the era of DL, function approximators in RL are either tabular~\citep{rl} or utilized learnable weights based on handcrafted features.~\citep{successorfeature}. DQN~\citep{nature_dqn}, which stimulates the tremendous development of DRL, leverages NNs to approximate Q functions. The subsequent work~\citep{ppo,ddpg,td3,sac,wd3}, which focuses on the RL side, utilizes NNs as general function approximators. As representation learning techniques~\citep{bert,cpc,moco,mocov2,he2022masked} continue to evolve, an increasing number of these methods are being incorporated into the field of RL~\citep{unreal,pbl,ghosh2020representations,laskin2020curl,drq,drqv2,rad,cpc}, because the learned representations by NNs with these methods are beneficial for RL agents and also the downstream tasks. Our work is distinct because it considers the properties that RL should possess within the context of DL settings. Unlike some approaches that predominantly draw upon intuitions from the field of DL, our work aims for an integrated understanding of both domains.

\subsection{NNs-centric DRL}
Some studies focus on leveraging NNs to enhance the capabilities of DRL. Examples include approaches that employ NNs to learn multiple Q-values~\citep{averaged-dqn,lan2019maxmin,redq}, simultaneously learn state value and advantage value~\citep{averaged-dqn}, and deep successor features~\citep{successorfeature}. Researchers have gradually recognized that the integration of NNs with RL enables specific characteristics, such as the distinguishable representation property~\citep{he2023frustratingly} and the domination of top-subspaces in the learning process~\citep{lyle2021effect,erc}. However, this integration also introduces particular challenges that hinder DRL, such as the catastrophic forgetting~\citep{continualRL}, the capacity loss~\citep{lyle2021understanding}, the plasticity loss~\citep{lyle23understandingplasticity}, dormant neurons~\citep{dormantneurons}, primacy bias~\citep{primacybias}, and etc. These studies aim to identify and rectify NNs' attributes that are detrimental to RL. For instance, \citep{lyle2021understanding,lyle23understandingplasticity} assert that NNs become less adaptive to new information as learning progresses, thereby experiencing a capacity/plasticity loss that can be alleviated by increasing the neural network's representation rank. \textcolor{modcolor}{Recent literature discusses how to maximize representation rank in bandit settings and RL settings~\citep{papini1,papini2,zhang2023provably}, which cannot inherently bound the cosine similarity in this work.}

In contrast, our work diverges from these studies by originating from the inherent properties of RL rather than focusing solely on the NNs component. We establish a connection with representation rank and introduce a novel method for adaptively controlling it, providing a fundamentally different starting point compared to previous research. In terms of the form of the regularizer, both previous works~\citet{dr3, he2023frustratingly} and ours involve the inner product form of the representation. Nevertheless, both ~\citet{dr3} and~\citet{he2023frustratingly} utilize an unboundedly maximizing form. In contrast, our work starts with the Bellman equation and introduces an adaptive regularizer that will not unboundedly maximize the representation rank.

\vspace{-0.1in}
\section{Conclusion}
\vspace{-0.1in}
In this study, we discussed a significant issue of representation rank in DRL, which the current literature neglects to a great extent: How to adaptively control the representation rank of DRL agents. By rigorously analyzing an inherent property of the Bellman equation in the DRL setting, we introduced the theoretically grounded regularizer BEER, a novel approach to adaptively control the representation rank. Our theoretical and empirical analyses demonstrated the effectiveness of BEER, outperforming the existing methods that focus on the representation rank. In our future works, we plan to validate the efficacy of BEER on other benchmarks. Our work opens new avenues for understanding the role of representation rank in DRL and offers an adaptively practical tool to control the rank and thus improve agent performance. 

\section*{Acknowledgments}
The authors thank anonymous reviewers, the area chair, and the senior area chair for fair evaluations and professional work. This research was supported by Grant 01IS20051 and Grant 16KISK035 from the German Federal Ministry of Education and Research (BMBF).

\bibliography{reference}
\bibliographystyle{iclr2024_conference}
\clearpage

\appendix
\clearpage
\section{\label{app: sec: proof dot similarity}Proof of \cref{theorem: dot similarity}}

\DotSimilarityUpperBound*
\begin{proof}
We start with the Bellman equation for Q values, i.e.,
\begin{equation}
Q(s,a) = r(s,a)+ \gamma \mathbb{E}_{s'\sim p(s'|s,a),a'\sim  \pi(s')}[Q(s',a')].
\end{equation}

Substituting the representation  $Q(s,a)=\phi(s,a)^\top w$ into the equation, we get
\begin{equation}
\phi(s,a)^\top w = r+\gamma \overline{\phi(s',a')}^\top w].
\end{equation}

Taking the norm of both sides gives us
\begin{align}
&\phi(s,a)^\top w - \gamma \overline{\phi(s',a')}^\top w = r, \\
&\| (\phi(s,a)^\top - \gamma \overline{\phi(s',a')}^\top)w \| = \|r\|, \\
&\| \phi(s,a)^\top - \gamma \overline{\phi(s',a')}^\top \| \|w\| \geq \|r\|, \\
&\| \phi(s,a)^\top - \gamma \overline{\phi(s',a')}^\top \| \geq \frac{\|r\|}{\|w\|},
\end{align}

where the third line follows by the Cauchy-Schwarz inequality. Taking the square of both sides, we obtain 

\begin{equation}
\| \phi(s,a) - \gamma \overline{\phi(s',a')} \|^2 \geq \frac{\|r\|^2}{\|w\|^2}.
\end{equation}
Expanding the square of the norm, we find
\begin{align}
& \| \phi(s,a) - \gamma \overline{\phi(s',a')} \|^2 \\
= &\langle \phi(s,a)-\gamma \overline{\phi(s',a')}, \phi(s,a)-\gamma \overline{\phi(s',a')}\rangle \\
= & \| \phi(s,a) \|^2 -2\gamma  \langle \phi(s,a), \overline{\phi(s',a')}\rangle + \gamma^2 \| \overline{\phi(s',a')}\|^2 \geq \frac{\|r\|^2}{\|w\|^2}
\end{align}

Finally, rearranging the terms yields

\begin{equation}
\langle \phi(s,a), \overline{\phi(s',a')} \rangle  \leq (\| \phi(s,a) \|^2 + \gamma^2\| \overline{\phi(s',a')} \|^2 - \frac{\|r\|^2}{\|w\|^2} )\frac{1}{2\gamma},
\end{equation}

which concludes the proof.
\end{proof}
\section{Similarity upper bound under the Bellman optimality equation}
We discuss the similarity upper bound under the Bellman optimal equation. That induces another BEER regularizer, which is slightly different from \cref{eq: beer regularizer}. We first introduce an upper bound on the representations of two consecutive representations based on the Bellman optimality Equation. We then derive the related bound on the cosine similarity. Afterward, we introduce a BEER regularizer based on the bound combined with the DQN algorithm. Such a regularizer is used in \cref{fig: illustrative experiments grid world,fig: illustrative experiments Lunar Lander}.
\begin{restatable}{theorem}{DotSimilarityUpperBoundBelmanOptimalityEquation}
\label{app theorem: dot similarity bellman optimality equation}
 Under \cref{assumption: function and weight}, given Q value $Q(s,a)=\phi(s,a)^\top w$, 
    \begin{equation}\label{eq: dot similarity upper bound constraint bellman optimality equation}
        \langle \phi(s,a), \widehat{\phi(s',a')} \rangle  \leq (\| \phi(s,a) \|^2 + \gamma^2\| \widehat{\phi(s',a')} \|^2 - \frac{\|r\|^2}{\|w\|^2} )\frac{1}{2\gamma},
    \end{equation}
where $\langle \cdot, \cdot \rangle$ represents the inner product, $\phi$ is a state-action representation vector, $w$ is a weight. And \textcolor{modcolor}{$\widehat{\phi(s',a')}= \mathbb{E}_{s'\sim p(s'|s,a)}\phi (s',a')$} with $a' = \arg \max_{a'\in \mathcal{A}}Q(s',a')$. 
\end{restatable}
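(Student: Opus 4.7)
The plan is to mirror the proof of \cref{theorem: dot similarity} line-by-line, with the sole structural change being that we invoke the Bellman optimality equation $Q^*(s,a) = r(s,a) + \gamma \mathbb{E}_{s' \sim p(\cdot \mid s,a)}[\max_{a'} Q^*(s',a')]$ in place of the Bellman consistency equation. First I would substitute the linear decomposition $Q(s,a) = \phi(s,a)^\top w$ on both sides. Inside the expectation we then encounter $\max_{a'} \phi(s',a')^\top w$, which equals $\phi(s', a^*(s'))^\top w$ for $a^*(s') = \arg\max_{a' \in \mathcal{A}} Q(s',a')$. Because $w$ is a fixed vector independent of $s'$, it can be pulled out of the expectation over $s'$, giving $\mathbb{E}_{s'}[\phi(s', a^*(s'))]^\top w = \widehat{\phi(s',a')}^\top w$ in the notation of the theorem.

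From this point the derivation is identical to the original: rearrange to $(\phi(s,a) - \gamma \widehat{\phi(s',a')})^\top w = r$, take norms and apply Cauchy--Schwarz to obtain $\| \phi(s,a) - \gamma \widehat{\phi(s',a')} \| \cdot \| w \| \geq \| r \|$, divide by $\| w \|$ (legitimate by \cref{assumption: function and weight}, which ensures $w \neq 0$), square both sides, expand the squared norm via the standard polarization identity $\| u - \gamma v \|^2 = \| u \|^2 - 2\gamma \langle u, v \rangle + \gamma^2 \| v \|^2$ with $u = \phi(s,a)$ and $v = \widehat{\phi(s',a')}$, and finally solve for the inner product to recover \cref{eq: dot similarity upper bound constraint bellman optimality equation}.

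The only real subtlety, and hence the step I expect to be the main obstacle, is justifying the commutation $\mathbb{E}_{s'}[\phi(s', a^*(s'))^\top w] = \mathbb{E}_{s'}[\phi(s', a^*(s'))]^\top w$. This works because the argmax selects an action that depends on $s'$ (and implicitly on $w$), but once the selection is made we are left with a linear functional of the fixed vector $w$, so linearity of expectation applies directly. Everything beyond this single algebraic move is a verbatim translation of the proof of \cref{theorem: dot similarity}, with $\overline{\phi(s',a')}$ replaced throughout by $\widehat{\phi(s',a')}$, so I would not rederive the intermediate norm manipulations but instead cite the earlier argument.
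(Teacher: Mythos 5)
Your proposal is correct and follows essentially the same route as the paper's own proof: substitute the linear form into the Bellman optimality equation, pull $w$ out of the expectation, rearrange, apply Cauchy--Schwarz, square, and expand the norm. The one step you flag as a subtlety (commuting the expectation with the inner product against the fixed $w$) is indeed the only point where the optimality-equation version differs from \cref{theorem: dot similarity}, and your justification via linearity of expectation is exactly what the paper implicitly relies on.
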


The proof is similar to \cref{app: sec: proof dot similarity}.

\begin{proof}
We begin with the Bellman optimality equation for Q values, i.e.,
\begin{equation}
Q(s,a) = r(s,a)+ \gamma \mathbb{E}_{s'\sim p(s'|s,a)}[\max_{a' \in \mathcal{A}}Q(s',a')]
\end{equation}

Substituting the representation $Q(s,a)=\phi(s,a)^\top w$ into the equation, we get
\begin{equation}
\phi(s,a)^\top w = r(s,a)+ \gamma \widehat{\phi(s',a')}^\top w 
\end{equation}
Taking the norm of both sides yields
\begin{align}
&\phi(s,a)^\top w - \gamma \widehat{\phi(s',a')}^\top w = r, \\
&\| (\phi(s,a)^\top - \gamma \widehat{\phi(s',a')}^\top)w \| = \|r\|, \\
&\| \phi(s,a)^\top - \gamma \widehat{\phi(s',a')}^\top \| \|w\| \geq \|r\|, \\
&\| \phi(s,a)^\top - \gamma \widehat{\phi(s',a')}^\top\| \geq \frac{\|r\|}{\|w\|},
\end{align}
where the third line follows from the Cauchy-Schwarz inequality. Squaring both sides, we find
\begin{equation}
\| \phi(s,a) - \gamma \widehat{\phi(s',a')}^\top \|^2 \geq \frac{\|r\|^2}{\|w\|^2}.
\end{equation}
Expanding the square of the norm, we find
\begin{align}
& \| \phi(s,a) - \gamma \widehat{\phi(s',a')} \|^2 \\
= &\langle \phi(s,a)-\gamma \widehat{\phi(s',a')}, \phi(s,a)-\gamma \widehat{\phi(s',a')}\rangle \\
= & \| \phi(s,a) \|^2 -2\gamma  \langle \phi(s,a), \widehat{\phi(s',a')}\rangle + \gamma^2 \| \widehat{\phi(s',a')}\|^2 \geq \frac{\|r\|^2}{\|w\|^2}
\end{align}
Finally, we obtain the inequality stated in \eqref{eq: dot similarity upper bound constraint bellman optimality equation}:

\begin{equation}
\langle \phi(s,a), \widehat{\phi(s',a')} \rangle  \leq (\| \phi(s,a) \|^2 + \gamma^2\| \widehat{\phi(s',a')} \|^2 - \frac{\|r\|^2}{\|w\|^2} )\frac{1}{2\gamma},
\end{equation}
which completes the proof.
\end{proof}

The following remark arises from \cref{app theorem: dot similarity bellman optimality equation}.
\begin{remark}
    \label{app remark: cos similarity bellman optimality equation}
    Under \cref{assumption: function and weight}, given Q value $Q(s,a)=\phi(s,a)^\top w$, then 
    \begin{equation}\label{app eq: cos similarity upper bound constraint}
       \cos( \phi(s,a), \widehat{\phi(s',a')} )  \leq (\| \phi(s,a) \|^2 + \gamma^2\| \widehat{\phi(s',a')} \|^2 - \frac{\|r\|^2}{\|w\|^2} )\frac{1}{2\gamma \| \phi(s,a) \| \| \widehat{\phi(s',a')} \| }.
    \end{equation}
\end{remark}
According to \cref{app theorem: dot similarity bellman optimality equation} and \cref{app remark: cos similarity bellman optimality equation}, we obtain the following regularizer:

\begin{equation}\label{app eq: beer regularizer bellman optimality eq}
\begin{aligned}
    \mathcal{R}(\theta) = \text{ReLU}\Big( &\cos(\phi(s,a),\mathbb{SG}\widehat{\phi(s',a')})  - \\
    & (\| \phi(s,a) \|^2 + \gamma^2\| \widehat{\phi(s',a')} \|^2 - \frac{\|r\|^2}{\|w\|^2} )\frac{1}{2\gamma \| \phi(s,a) \| \| \widehat{\phi(s',a')} \| } \Big),
\end{aligned}
\end{equation}
which is used in \cref{fig: illustrative experiments Lunar Lander,fig: illustrative experiments grid world}.
\section{\label{app sec: illustrations}Additional Details Regarding Experiments}
In this section, we intend to furnish additional insights and specifications concerning the experimental setups employed to derive the results presented in the main manuscript. This systematic presentation of our experimental procedures aims to promote transparency, facilitate replication, and enable future research to build upon these foundational insights.

\textbf{Algorithm Design.} Our goal is to analyze the properties of the BEER regularizer. As we previously claimed, it can be integrated with DRL algorithms that involve value approximation. In addition to combining it with DQN (~\cref{fig: illustrative experiments grid world,fig: illustrative experiments Lunar Lander}), we also integrate the BEER algorithm with DPG~\citep{dpg}, as listed in~\cref{algorithm1: BEER}. The purpose of this integration is to focus on studying the intrinsic properties of BEER itself, thereby demonstrating the superiority of the BEER regularizer. DPG propagates gradients through the value network when optimizing the policy network. A high-quality value network leads to a high-quality policy network. Our BEER regularizer is primarily designed to achieve a better value network by adaptively controlling the representation rank. Therefore, we chose to combine the BEER regularizer with DPG. In our implementation of BEER based on DPG, we exclusively use pure DPG. The exploration scheme of BEER+DPG involves adding Gaussian noise to the policy network, as done in the TD3 paper~\citep{td3}. It's worth noting that, for a more accurate measurement of the BEER algorithm, we have not included any additional performance-enhancing techniques such as double Q clipping or target noise smoothing for the BEER algorithm based on DPG. Comparing BEER based on DPG with TD3 serves as a natural 
\textbf{ablation} experiment. This is because BEER is combined only with DPG without any other tricks, while TD3 has already been widely proven to be one of the best DPG algorithms across a broad range of scenarios~\citep{drqv2,td3,sac}.

\textbf{Implementation Consistency.} In our quest to maintain scientific rigor, we have imposed deterministic controls by fixing all random seeds, encompassing libraries such as PyTorch, Numpy, Gym, Random, and CUDA. Unless otherwise noted, all algorithms under study are assessed across 10 predetermined random seeds. Our code is available at \href{https://github.com/sweetice/BEER-ICLR2024}{https://github.com/sweetice/BEER-ICLR2024}.

\textbf{Computational resources.} We conducted all experiments on a single GPU server. This server is equipped with 8 GeForce 2080 Ti GPUs and has 70 CPU logical cores, allowing for the parallel execution of 70 trials. Computational tasks for 70 runs can be completed within eight hours.

\textbf{\Cref{fig: illustrative experiments Lunar Lander}.} We conducted the experiments concerning this figure on the \href{https://www.gymlibrary.dev/environments/box2d/lunar_lander/}{Lunar Lander-1 environment}. We calculated the ground truth Q-values through Monte Carlo methods~\cite{td3}. The error bars in the figures represent half of the standard deviation across multiple trials. Unless specified otherwise, such error bars in all subsequent figures denote half of a standard deviation across 10 unique seeds. The implementation of the three algorithms is nearly identical, except for the portions unique to each algorithm. For specific hyperparameters, please refer to~\cref{app table: parameters for lunar lander}.

\textbf{\Cref{fig: illustrative experiments grid world}.} The grid world is shown in \cref{fig: sub gridworld}. If the agent arrives at $S_T$, it gets a reward of 10, and other states get a reward of 0. We present the remaining hyper-parameters for the grid world in \cref{app table: parameters for grid world}.

\textbf{\Cref{fig: beer performance}.} Note that we select 12 challenging tasks from dmcontrol~\citep{dm-control}. Our selection is based on the data reported by~\citet{pytorch_sac}, where we opt for the twelve tasks with the highest level of challenge to present in the main text. Results for simpler tasks are also shown in~\cref{app table: BEER performance statistics simple task}. For \cref{fig: beer performance}, we examine the algorithmic performance on continuous control tasks in the DeepMind Control Suite. The interaction protocol employs the gym Python library~\citep{gym}. The shaded regions depict half a standard deviation around the mean evaluation score from 10 different seeds. Additionally, the performance curves are smoothened using a moving average window of 10. Evaluations were conducted over 1 million timesteps, with average returns recorded at intervals of 10,000 timesteps over 10 episodes.  We present the remaining hyper-parameters for the DMControl in \cref{app table: hyper-parameters for dmcontrol}.

\textbf{\Cref{fig: approximation error 4}.} The approximation error is normalized relative to the true Q value; i.e., $\text{approximation error}= \frac{\text{orignal approximation error}}{\text{True Q} + \epsilon}$, where $\epsilon=1e-6$ is a small number to stabilize the computational process. The purpose of normalization is to mitigate the impact of the value scale on the approximation error, which arises due to performance disparities among various algorithms so that we can compare the approximation error of different algorithms. The performance of the algorithm under test is evaluated across 1 million timesteps. We assess the approximation error at intervals of 10,000 timesteps. For clarity in visualization, the y-axis is limited to the range $[0, 5]$.

\textbf{\Cref{table: BEER performance statistics}.} First, we calculate the average return obtained by the algorithm across ten episodes within one million steps. We then select the best average value from this set. Next, we choose ten different random seeds and calculate similar values for each seed. The final data is obtained by averaging these ten values.

\begin{table}[htbp]
 \caption{\label{app table: parameters for lunar lander}Hyper-parameters settings for lunar lander-v1 task (\cref{fig: illustrative experiments Lunar Lander}). For InFeR, we use its default hyper-parameters. For every $5e3$ step, we evaluate the algorithms.}
	\renewcommand\tabcolsep{3.0pt} 
	\centering
	\begin{tabular}{l|c}
		\toprule
		\textbf{Hyper-parameter} & \textbf{Value}  \\
		\midrule
		\textit{Shared hyper-parameters} & \\
        Dimension of state space & 8 \\
        Action space & Discrete(4): up, down, left, right \\
		Discount ($\gamma$) & 0.99 \\
		Replay buffer size & $10^5$ \\
		Optimizer & Adam \citep{adam} \\
		Learning rate for Q-network & $1 \times 1e^{-3}$ \\
		Number of hidden layers for all networks & 3 \\
		Number of hidden units per layer & 64 \\
		Activation function & ReLU \\
		Mini-batch size & 64  \\
		Random starting exploration time steps & $10^3$ \\
		Target smoothing coefficient ($\eta$) & 0.005 \\
		Gradient clipping & False \\
		Exploration method & Epsilon-Greedy \\
		$\epsilon$ (Exploration) & 0.1 \\
		Evaluation episode & 10 \\
		Number of steps & $4e4$ \\
  $\epsilon$ for rank computing & 0.01 \\
		\midrule
		\textit{BEER} & \\
		BEER coefficient ($\beta$) & $5e{-3}$ \\
            \midrule
		\textit{InFeR} & \\
		Number of head (k) & 10 \\
            $\beta$ & 100 \\
            $\alpha$ & 0.1 \\
		\bottomrule
	\end{tabular}
\end{table}

\begin{table}[htbp]
 \caption{\label{app table: parameters for grid world}Hyper-parameters settings for Grid World experiments. For every $5e3$ step, we evaluate the algorithms.}
	\renewcommand\tabcolsep{3.0pt} 
	\centering
	\begin{tabular}{l|c}
		\toprule
		\textbf{Hyper-parameter} & \textbf{Value}  \\
		\midrule
		\textit{Shared hyper-parameters} & \\
        State space & integer: from 0 to 19 \\
        Action space & Discrete(4): up, down, left, right \\
		Discount ($\gamma$) & 0.99 \\
		Replay buffer size & $10^5$ \\
		Optimizer & Adam \citep{adam} \\
		Learning rate for Q-network & $1 \times 10^{-4}$ \\
		Number of hidden layers for all networks & 2 \\
		Number of hidden units per layer & 32 \\
		Activation function & ReLU \\
		Mini-batch size & 64  \\
		Random starting exploration time steps & $10^3$ \\
		Target smoothing coefficient ($\eta$) & 0.005 \\
		Gradient clipping & False \\
		Exploration Method & Epsilon-Greedy \\
		$\epsilon$ (Exploration) & 0.1 \\
		Evaluation episode & 10 \\
		Number of timesteps & $2.5e4$ \\
            $\epsilon$ for rank computing & 0.01 \\
		\midrule
		\textit{BEER} & \\
		BEER coefficient ($\beta$) & $5e-3$ \\
		\bottomrule
	\end{tabular}
\end{table}

\begin{table*}[htbp]
 \caption{\label{app table: hyper-parameters for dmcontrol}Hyper-parameters settings for DMcontrol experiments (\cref{fig: beer performance,fig: approximation error 4,table: BEER performance statistics}).}
	\renewcommand\tabcolsep{3.0pt} 
	\centering
	\begin{tabular}{l|c}
		\toprule
		\textbf{Hyper-parameter} & \textbf{Value}  \\
		\midrule
		\textit{Shared hyper-parameters} & \\
		Discount ($\gamma$) & 0.99 \\
		Replay buffer size & $10^6$ \\
		Optimizer & Adam \citep{adam} \\
		Learning rate for actor & $3 \times 10^{-4}$ \\
		Learning rate for critic & $3 \times 10^{-4}$ \\
		Number of hidden layers for all networks & 2 \\
		Number of hidden units per layer & 256 \\
		Activation function & ReLU \\
		Mini-batch size & 256  \\
		Random starting exploration time steps & $2.5 \times 10^4$ \\
		Target smoothing coefficient ($\eta$) & 0.005 \\
		Gradient clipping & False \\
  $\epsilon$ for rank computing & 0.01 \\
		\midrule
		\textit{TD3} & \\
		Variance of exploration noise & 0.2 \\
		Variance of target policy smoothing & 0.2 \\
		Noise clip range & $[-0.5, 0.5]$ \\
		Delayed policy update frequency & 2 \\
            Target update interval ($d$) & 2\\
		\midrule
		\textit{SAC} & \\
		Target entropy & - dim of $\mathcal{A}$ \\
		Learning rate for $\alpha$ & $1\times 10^{-4}$ \\
            Target update interval ($d$) & 2\\
            \midrule 
            \textit{InFeR} & \\
		Number of head (k) & 10 \\
            $\beta$ & 100 \\
            $\alpha$ & 0.1 \\
            \midrule 
                        \textit{DR3} & \\
            Regularization coefficient ($c_0$) & 5e-3 \\
\midrule
		\textit{BEER} & \\
		BEER coefficient ($\beta$) & $1 \times 10^{-3}$ \\
            Variance of exploration noise & 0.2 \\
		\bottomrule
	\end{tabular}
\end{table*}
\clearpage
\section{\label{app sec: relationship between cos and rank}Empirical Validation of Cosine Similarity's Control on Representation Rank}

In \cref{sec: dot similarity theory}, we discussed how cosine similarity can control representation rank. Here, we empirically validate the control effect of cosine similarity on representation rank to gain an intuitive understanding. The results of the experiment are shown in \cref{app fig: relationship cosine similarity and repr rank}. In this experiment, we first generate a matrix with very high cosine similarity, resulting in a very low representation rank. We randomly sample a mini-batch of row vectors from this matrix and minimize the cosine similarity between these vectors. We then track the relationship between cosine similarity and the representation rank we've defined, as displayed in \cref{app fig: relationship cosine similarity and repr rank}. For better visual clarity, we use a logarithmic scale on the y-axis to better observe the relative change trends between these two metrics. We find that a slight reduction in cosine similarity immediately leads to an increase in representation rank, demonstrating the effective control of cosine similarity for the representation rank.

Specifically, we first generate a $256 \times 256$ dimensional matrix and use the Adam optimizer with a learning rate of 5e-3. We sample 64 row vectors from the matrix each time. Every 100 training steps, we record the cosine similarity among the vectors in the matrix as well as the representation rank, using an epsilon value of 0.05 for calculating the latter. The experiment is conducted over 2,000 steps. In the later stages of the experiment, the cosine similarity becomes too small to be within the display range. At this point, the representation rank is almost full rank.

\begin{figure}
\centering
  \includegraphics[width=1\textwidth]{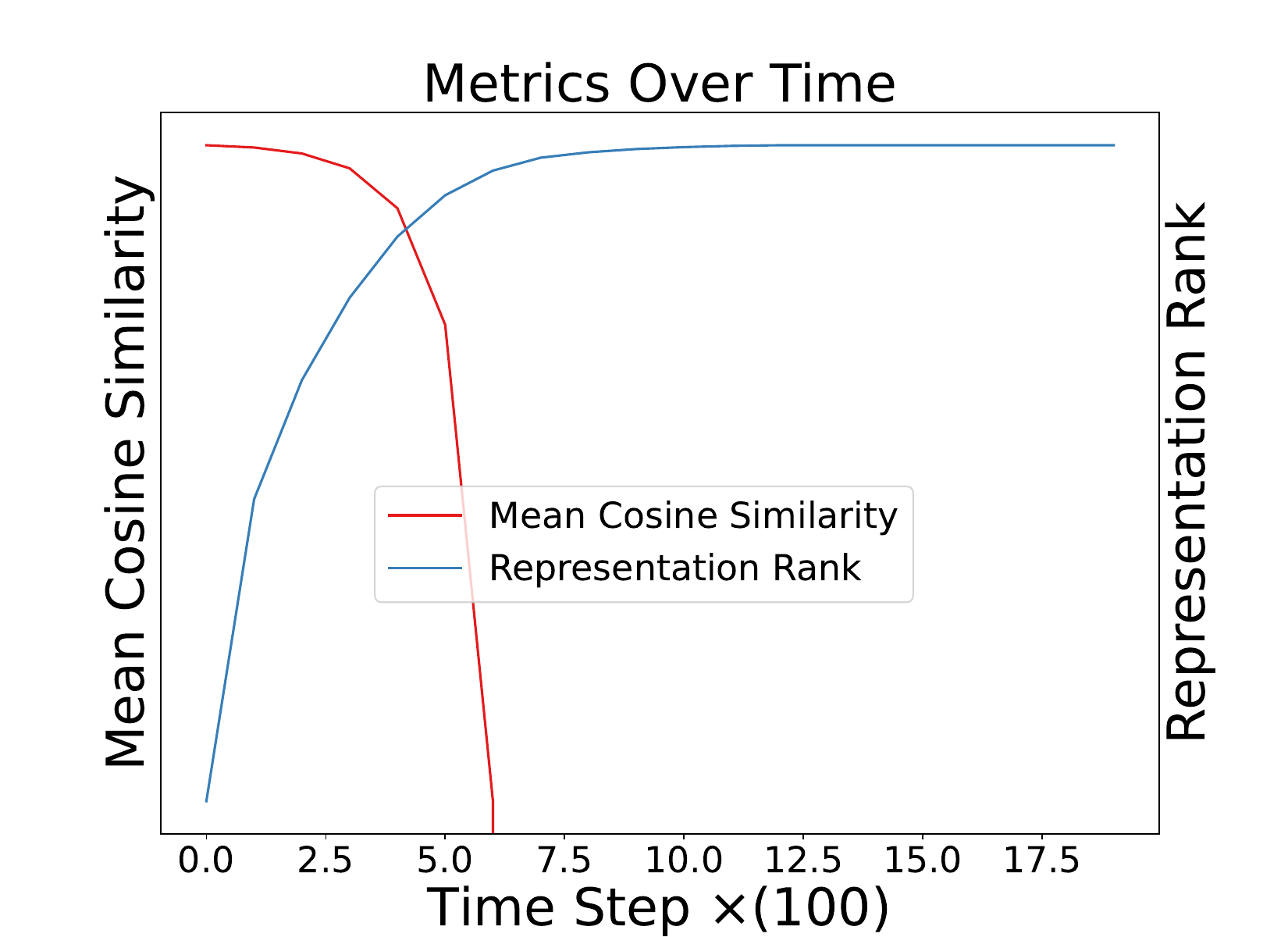}
  \caption{\label{app fig: relationship cosine similarity and repr rank}Relationship between cosine similarity and representation rank when minimizing cosine similarity. Our experiment demonstrates the effective control of cosine similarity for representation rank.}
\end{figure}
\clearpage

\section{\label{app sec: is beer in value approximation?}
Is the BEER regularizer implicitly incorporated into value approximation algorithms?}

Our theory and regularizers are built upon the Bellman equation. Almost all the value-based DRL algorithms leverage the Bellman equation. Thus, one would suspect that this regularizer effect would also be implicitly satisfied by these algorithms without BEER since these algorithms also use the Bellman equation. \citet{dr3} shows that the dynamics of NNs result in feature co-adaptation. Such co-adaptation makes the inner product of representations grow fast, which potentially makes DRL agents fail to hold the bound. To corroborate this claim, we conduct additional experiments. During the training process, we collect data every two thousand training steps to ascertain whether the bound~\cref{eq: dot similarity upper bound constraint} is preserved or not. 

We define 
\begin{equation}
    C= \langle \phi(s,a), \overline{\phi}(s',a') \rangle  - (\| \phi(s,a) \|^2 + \gamma^2\| \overline{\phi}(s',a') \|^2 - \frac{\|r\|^2}{\|w\|^2} )\frac{1}{2\gamma}.
\end{equation}
$C<=0$ means the bound in \cref{eq: dot similarity upper bound constraint} holds, otherwise it does not. The results of the experiments are presented in~\cref{app fig: bound}. The results demonstrate that the tested algorithms don't consistently hold the bound. While BEER makes the agent hold the bound. 

Specifically, BEER maintains this bound across the first three tasks, thereby significantly outperforming other algorithms on these tasks (see~\cref{fig: beer performance}). In contrast, alternative algorithms under test, even those (DR3 and InFeR) specifically designed to increase the representation rank, fail to maintain this bound, resulting in inferior performance. On the finger-spin task, all algorithms meet this bound, and there is no discernible difference in performance among them. This demonstrates that our designed algorithm effectively ensures the satisfaction of this bound, thereby adaptively regularizing the rank and ultimately improving performance. These experimental findings are consistent with our theoretical analysis.

\begin{figure}
\centering
  \includegraphics[width=1\textwidth]{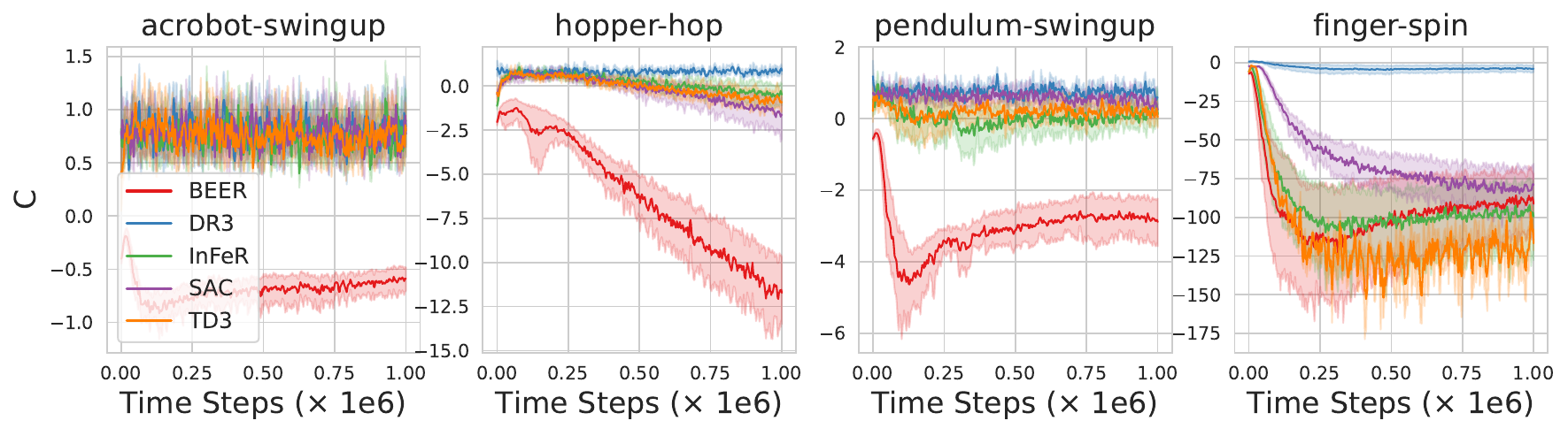}
  \caption{\label{app fig: bound}The results demonstrate that the tested algorithms don't hold the bound across all the tasks. However, BEER makes the agent hold the bound. Combined with the performance results (see \cref{fig: beer performance}), these demonstrate that BEER effectively ensures the satisfaction of this bound, thereby adaptively regularizing the rank and ultimately improving performance.}
\end{figure}
\newpage
\section{\label{app sec: coefficient of beer}The coefficient of BEER}

In this section, we investigate the selection of hyper-parameter $\beta$. We continue to use the four environments identified in the main text as representative examples and consider eight different values for the hyperparameter $\beta$. Our experimental results indicate that starting from 0.001, as the hyperparameter increases, the algorithm's performance gradually deteriorates due to the increasing interference of the regularization effect on the normal learning of the value function. The results suggest that a $\beta$ value range less than or equal to 0.001 is reasonable. Performance differences at lower $\beta$ values are not significant. Therefore, in the main text, we consistently use $1e-3$ as the hyperparameter.

\begin{table*}[!ht]
    \centering
    \caption{\label{app table: coefficient selection} 10-Seed peak scores over 1M Steps on DMC. The best score is marked with the \colorbox{mine}{blue} color box. The results suggest that a $\beta$ value range less than or equal to 0.001 is reasonable. Performance differences at lower $\beta$ values are not significant. Therefore, in the main text, we consistently use $1e-3$ as the hyperparameter.}
\resizebox{\textwidth}{!}{
\begin{tabular}{ll|lllllllll}
\toprule
\textbf{Domain} & \textbf{Task} &\textbf{0.0001} &\textbf{0.0005} &\textbf{0.001} &\textbf{0.005} &\textbf{0.01} &\textbf{0.1} &\textbf{1.0} &\textbf{10.0}\\
\midrule
Acrobot & Swingup  & \;232.8 &\;191.7 & \colorbox{mine} {243.3} &\;171.9 &\;171.8 &\;148.2 &\;148.9 &\;81.1\\
Hopper & Hop  & \;376.1 & \colorbox{mine} {404.5} &\;365.7 &\;282.3 &\;362.7 &\;217.9 &\;0.4 &\;0.6\\
Pendulum & Swingup  & \;830.0 &\;829.8 & \colorbox{mine} {830.0} &\;829.1 &\;828.8 &\;616.6 &\;660.2 &\;25.5\\
Finger & Spin  & \;974.8 &\;978.2 &\;979.2 &\;975.1 & \colorbox{mine} {981.7} &\;764.4 &\;384.1 &\;1.3\\
\midrule
Average & Score &\;603.4 &\;601.0 &\colorbox{mine} {604.6} &\;564.6 &\;586.2 &\;436.8 &\;298.4 &27.1\\
\bottomrule
\end{tabular}
}
\end{table*}

\clearpage
\section{\label{app sec: exp on simple tasks}Additional experimental results on DMControl suite}

In \cref{sec: exp}, we report BEER performance in 12 challenging tasks. In \cref{app table: BEER performance statistics simple task}, we report the performance of the tested algorithms on easier DMC tasks. The results show that BEER has a slight performance advantage over the other algorithms, although not as large as the advantage on the other challenging tasks. This is because these tasks are relatively simple according to the results reported by~\citet{pytorch_sac}. So the performance difference between the algorithms is not too large. Less capable algorithms are also able to perform very well.

\begin{table*}[!ht]
    \centering
    \caption{\label{app table: BEER performance statistics simple task} 10-Seed peak scores over 1M Steps on 9 easy DMC tasks. On these tasks, BEER has a slight performance advantage over the other algorithms, although not as large as the advantage on the other challenging tasks. This is because these tasks are relatively simple, so the performance difference between the algorithms is not too large. Less capable algorithms are also able to perform very well. $\pm$ corresponds to a standard deviation over ten trials. }
\resizebox{\textwidth}{!}{
\begin{tabular}{ll|lllll}
\toprule
\textbf{Domain} & \textbf{Task} &\textbf{BEER} &\textbf{DR3} &\textbf{InFeR} &\textbf{SAC} &\textbf{TD3}\\
\midrule
Cartpole & Swingup  &  \colorbox{mine} {879.2}{\footnotesize  $\pm$ 1.7} &\;875.8 {\footnotesize $\pm$ 6.9} &\;875.1 {\footnotesize $\pm$ 7.4} &\;869.5 {\footnotesize $\pm$ 5.7} &\;872.3 {\footnotesize $\pm$ 7.4}\\
Cheetah & Run  &  \colorbox{mine} {883.4}{\footnotesize  $\pm$ 12.2} &\;863.9 {\footnotesize $\pm$ 23.7} &\;857.3 {\footnotesize $\pm$ 37.7} &\;874.5 {\footnotesize $\pm$ 19.8} &\;853.4 {\footnotesize $\pm$ 37.6}\\
Finger & TurnEasy  &  \colorbox{mine} {976.5}{\footnotesize  $\pm$ 6.5} &\;961.2 {\footnotesize $\pm$ 50.5} &\;902.1 {\footnotesize $\pm$ 89.3} &\;949.2 {\footnotesize $\pm$ 47.8} &\;975.6 {\footnotesize $\pm$ 12.2}\\
Finger & TurnHard  & \;949.2 {\footnotesize $\pm$ 37.1} &\;796.8 {\footnotesize $\pm$ 123.6} &\;816.9 {\footnotesize $\pm$ 158.4} &\;896.5 {\footnotesize $\pm$ 97.6} & \colorbox{mine} {967.9}{\footnotesize  $\pm$ 10.0}\\
PointMass & Easy  & \;893.7 {\footnotesize $\pm$ 4.7} &\;815.3 {\footnotesize $\pm$ 270.7} &\;903.0 {\footnotesize $\pm$ 12.3} &\;902.3 {\footnotesize $\pm$ 12.5} & \colorbox{mine} {903.1}{\footnotesize  $\pm$ 12.3}\\
Reacher & Easy  &  \colorbox{mine} {985.4}{\footnotesize  $\pm$ 3.6} &\;982.5 {\footnotesize $\pm$ 3.9} &\;984.2 {\footnotesize $\pm$ 2.8} &\;984.5 {\footnotesize $\pm$ 4.0} &\;984.2 {\footnotesize $\pm$ 3.5}\\
Reacher & Hard  & \;975.4 {\footnotesize $\pm$ 2.0} &\;976.9 {\footnotesize $\pm$ 1.2} &\;976.4 {\footnotesize $\pm$ 5.2} & \colorbox{mine} {980.7}{\footnotesize  $\pm$ 0.9} &\;976.6 {\footnotesize $\pm$ 0.1}\\
Walker & Stand  & \;983.6 {\footnotesize $\pm$ 3.9} &\;988.0 {\footnotesize $\pm$ 3.8} &\;987.2 {\footnotesize $\pm$ 2.5} & \colorbox{mine} {988.2}{\footnotesize  $\pm$ 1.9} &\;985.3 {\footnotesize $\pm$ 2.7}\\
Walker & Run  &  \colorbox{mine} {793.6}{\footnotesize  $\pm$ 12.6} &\;653.7 {\footnotesize $\pm$ 129.5} &\;697.7 {\footnotesize $\pm$ 61.8} &\;791.0 {\footnotesize $\pm$ 26.1} &\;680.8 {\footnotesize $\pm$ 108.4}\\
\midrule
Average & Score & \colorbox{mine} {924.4 } &\;879.3 &\;888.9 &\;915.2 &911.0\\
\bottomrule
\end{tabular}
}
\end{table*}

\textcolor{modcolor}{
\newpage
\section{Computational Overhead of BEER}

In this section, we assess the computational impact of our proposed method, BEER, compared to other algorithms. Our evaluation focuses on two main aspects: the number of learnable parameters and the runtime analysis.

\textbf{Learnable Parameters.} Unlike methods such as InFeR, which introduce multiple additional heads to the value networks, thereby increasing the number of learnable parameters, BEER functions as a regularizer without adding any new learnable parameters to the base algorithm. This is a significant advantage as it maintains computational efficiency.

\textbf{Runtime Analysis.} To evaluate the runtime efficiency of BEER, we conduct comparative experiments against algorithms such as InFeR, TD3, SAC, and DR3. These experiments were performed on a GPU server equipped with an Intel(R) Xeon(R) Gold 6240 CPU @ 2.60GHz and NVIDIA GeForce RTX 2080 Ti GPUs. Each task is executed sequentially under the same evaluation protocol. \Cref{app table:runtime-analysis} presents the runtime in hours for each algorithm across various environments. As evident from Table \cref{app table:runtime-analysis}, BEER demonstrates competitive runtime efficiency, further highlighting its computational viability for practical applications in deep reinforcement learning.

\begin{table}[h]
\centering
\caption{Runtime comparison of BEER and other algorithms.}
\begin{tabular}{lcccccc}
\toprule
Environment         & BEER  & INFER & TD3   & SAC   & DR3   \\
\midrule
Pendulum-Swingup    & 1.43H & 1.79H & 1.20H & 1.68H & 1.34H \\
Acrobot-Swingup     & 1.46H & 1.83H & 1.20H & 1.71H & 1.36H \\
Hopper-Hop          & 1.48H & 1.83H & 1.18H & 1.74H & 1.30H \\
Finger-Spin         & 1.48H & 1.81H & 1.25H & 1.85H & 1.29H \\
\midrule
{Average Time} & {1.47H} & {1.81H} & {1.21H} & {1.74H} & {1.32H} \\
\bottomrule
\end{tabular}
\label{app table:runtime-analysis}
\end{table}

\section{Clarification of $\mathbb{SG}$ Definition}

We provide a detailed explanation about $\mathbb{SG}$ in this section. The term $\mathbb{SG}$, standing for 'Stop Gradient', refers to a concept intrinsic to the semi-gradient nature of the Bellman backup, as elaborated in section 9.3 of~\citet{rl}.

To elucidate, we formally define the $\mathbb{SG}$ operator as follows.
\begin{definition}[Stop Gradient]
The $\mathbb{SG}$ operator is applied to a differentiable function $f: \mathbb{R}^n \rightarrow \mathbb{R}^m$ within the context of gradient-based optimization. When this operator is applied, it generates a new function $\mathbb{SG}(f)$, such that for any input vector $\mathbf{x} \in \mathbb{R}^n$, the output during the forward computation remains the same as that of $f$, i.e., $\mathbb{SG}(f)(\mathbf{x}) = f(\mathbf{x})$. However, during the backward computation phase, where gradients are typically calculated through backpropagation, the gradient of $\mathbb{SG}(f)$ with respect to $\mathbf{x}$ is explicitly set to zero. This is irrespective of the actual gradient of $f$ at $\mathbf{x}$. Formally, this concept is encapsulated by the following equation
\begin{equation}
\forall \mathbf{x} \in \mathbb{R}^n, \quad \mathbb{SG}(f)(\mathbf{x}) = f(\mathbf{x}) \quad \text{and} \quad \nabla_{\mathbf{x}} \mathbb{SG}(f)(\mathbf{x}) = \mathbf{0}.
\end{equation}
\end{definition}
\section{Clarification on State-Action Pair Coverage in BEER}

We discuss the state-action pair coverage of BEER in this section since this affects the regularization of rank.

A critical aspect of our approach is the significance of the reachability of the agent. Specifically, the representation of state-action that are not reachable by any policy is deemed irrelevant. This perspective directly influences our approach to handling state-action pairs in BEER.

\begin{itemize}
    \item \textbf{Sampling Mechanism:} BEER incorporates a uniform sampling strategy from the replay buffer, designed to theoretically encompass the entire spectrum of possible state-action pairs. This approach ensures that our representation accounts for a comprehensive set of these pairs.
    \item \textbf{Exploration Enhancements:} In addition to the uniform sampling, BEER incorporates exploration tricks, such as adding noise to actions, to further ensure extensive coverage of state-action spaces.
\end{itemize}

\textbf{Practical Implementation of Sampling.} In practical terms, our implementation involves extensive sampling iterations, amounting to one million, with each batch comprising 256 samples. This substantial sampling framework is important in enhancing the likelihood of covering all possible state-action pairings, thereby addressing concerns regarding limited coverage. The specifics of this process can be found in \cref{algorithm1: BEER}.

\section{Contrast with Prior Theoretical Work}

Recent literature discusses how to maximize representation rank in bandit settings and RL settings~\citep{papini2,papini1,zhang2023provably}. We discuss the relationship between our work and the theoretical foundations established in~\citet{papini2,papini1,zhang2023provably}. Our approach, while acknowledging the importance of maximizing representation rank, presents a distinct perspective compared to these references.

Our work presents a notable contrast to the contributions of~\citet{papini2,papini1,zhang2023provably}. These works focus on selecting (pre-given) representations for decision-making problems while our research centers on learning good representations in DRL. The high representation rank assumption in~\citet{papini1,papini2} cannot inherently bound the cosine similarity derived in our paper. Our approach, grounded in the structure of the Bellman equation and neural networks, uniquely bounds this similarity. 

\textbf{The diversity assumption in ~\citet{papini1}.} The settings and definitions in ~\citet{papini1} differ significantly from ours. In \citet{papini1}, the definition of representation $\mu(s,a)=<\phi(s,a), \theta>$, where $\mu$ is the true reward vector, $\theta$ is a learnable parameters vector. Thus, $\phi$ is a static representation matrix for computing reward. However, in our settings, $\phi(s,a)$ is a learnable representation for directly computing $Q$ values. Let's first ignore the difference in settings. The answer is No. Their main theory is to achieve a good regret, the optimal representations $\phi^*$ must span the whole representation space $\mathbb{R}^d$, where $d$ is the dimension of representation $\phi$. It's a trivial fact that the rank of $\phi$ cannot additionally bound the mean cosine similarity of its column vectors. For example, consider the matrix
$$
A = \begin{pmatrix}
1+x & 1-y \\
1+y & 1-x
\end{pmatrix}.
$$
Here, $A$ being full rank ($x^2-y^2 \neq 0$) does not bound the cosine similarity. 
The cosine similarity (C) is
\begin{equation}
    C = \frac{(1 - x)(1 + x) + (1 - y)(1 + y)}{\sqrt{(1 - x)^2 + (1 - y)^2} \sqrt{(1 + x)^2 + (1 + y)^2}},
\end{equation}
and we know that $\lim_{x\to y} = 1$. In the case of a matrix A with full rank, the cosine similarity can be infinitely close to 1 when $x^2-y^2 \neq 0$, which demonstrates that the rank cannot inherently bound the cosine similarity.

\textbf{The UniSOFT assumption in~\citet{papini2}.} UniSOFT is a necessary condition for constant regret in any MDP with linear rewards. Specifically, it shows that the span of the representation $\phi_h(s, a)$ over all states and actions that are reachable under any policy $\pi$ is the same as the span of the optimal representation $\phi^*_h(s)$ over all states that have a non-zero probability of being reached under the optimal policy $\pi^*$. This implies that the representation map captures all the necessary information to represent the optimal policy. UniSOFT assumption cannot bound the cosine similarity derived in our paper. Because UniSOFT only considers the span of the representation map. UniSOFT doesn't further consider the interrelationships between column vectors of the representation map, allowing high average cosine similarity (<1) for the column vectors. The UniSOFT condition can be satisfied even with high cosine similarity (can be infinitely close to 1 when $x^2-y^2 \neq 0$, $A$ is full rank), as it primarily concerns the span of representations.

}

\end{document}